\renewcommand{\epsilon}{\varepsilon}
\newcommand{\eps}{\varepsilon}
\newcommand{\E}[1]{\mathbb{E}\left[#1\right]}
\DeclareMathOperator*{\argmax}{arg\,max}
\DeclareMathOperator*{\argmin}{arg\,min}
\newcommand{\eqdef}{\overset{\mathrm{def}}{=\joinrel=}}
\newcommand\numberthis{\addtocounter{equation}{1}\tag{\theequation}}
\newcommand{\mathleft}{\@fleqntrue\@mathmargin\parindent}
\newcommand{\mathcenter}{\@fleqnfalse}
\newcommand{\MOV}[1][r]{M_{#1}(\mathcal{E})}
\newcommand{\election}{\mathcal{E}=(V, C)}
\DeclarePairedDelimiter{\ceil}{\lceil}{\rceil}
\DeclarePairedDelimiter{\floor}{\lfloor}{\rfloor}
\renewenvironment{proof}{\noindent{\em Proof:}}{ \hfill $\square$\\ }
\newenvironment{proof_sketch}{\noindent{\em Proof sketch:}}{ \hfill $\square$\\ }
\newcommand{\NPshort}{\ensuremath{\mathsf{NP}}\xspace}
\newtheorem{theorem}{\bf Theorem}
\newtheorem{lemma}{\bf Lemma}
\newtheorem{corollary}{\bf Corollary}
\newtheorem{definition}{\bf Definition}
\newtheorem{claim}{\bf Claim}
\title{Estimating the Margin of Victory of an Election using Sampling}
\author{Palash Dey and Y. Narahari\\
Department of Computer Science and Automation, Indian Institute of Science, Bangalore, India\\
\{palash$\mid$hari\}@csa.iisc.ernet.in}
\begin{document}

\maketitle

\begin{abstract}
The margin of victory of an election is a useful measure to capture the robustness of an election outcome. It also plays a crucial role in determining the sample size of various algorithms in post election audit, polling etc. In this work, we present efficient sampling based algorithms for estimating the margin of victory of elections.

More formally, we introduce the \textsc{$(c, \epsilon, \delta)$--Margin of Victory} problem, where given an election $\mathcal{E}$ on $n$ voters, the goal is to estimate the margin of victory $M(\mathcal{E})$ of $\mathcal{E}$ within an additive factor of $cM(\mathcal{E})+\eps n$. We study the \textsc{$(c, \epsilon, \delta)$--Margin of Victory} problem for many commonly used voting rules including scoring rules, approval, Bucklin, maximin, and Copeland$^{\alpha}.$ We observe that even for the voting rules for which computing the margin of victory is $\NPshort$-Hard, there may exist efficient sampling based algorithms, as observed in the cases of maximin and Copeland$^{\alpha}$ voting rules.
\end{abstract}

\section{Introduction}

In many real life applications, there is often a need for a set of agents to agree upon a common decision although 
they may have different preferences over the available candidates to choose from. A natural 
approach used in these situations is voting. Some prominent examples 
of the use of voting rules in the context of multiagent systems include 
collaborative filtering~\cite{pennock2000social}, personalized product selection~\cite{lu2011budgeted} etc.

In a typical voting scenario, we have a set of votes each of which is a complete ranking over a set of candidates. We also have a function called voting rule that takes as input a set of votes and outputs a candidate as the winner. A set of votes over a set of candidates along with a voting rule is called an {\em election} and the winner is called the outcome of the election. 

Given an election, one may like to know how robust the election outcome is with respect to the changes in votes~\cite{shiryaev2013elections,caragiannis2014modal,regenwetter2006behavioral}. One way to capture robustness of an election outcome is to compute the minimum number of votes that must be changed to change the outcome. This idea of robustness is captured precisely by the notion called {\em margin of victory}. The margin of victory of an election is the smallest number of votes that need to be changed to change the election outcome. In a sense, an election outcome is considered to be robust if the margin of victory is large.


\subsection{Motivation}\label{sec:motivation}

In addition to being interesting purely because of theoretical reasons, the margin of victory of an election plays a crucial role in many practical applications. One such example is post election audits --- methods to observe a certain number of votes (which is often selected randomly) after an election to detect an incorrect outcome. There can be a variety of reasons for an incorrect election outcome, for example, software or hardware bugs in voting machine~\cite{norden2007post}, machine output errors, use of various clip-on devices that can tamper with the memory of the voting machine~\cite{wolchok2010security}, human errors in counting votes. Post election audits have nowadays become common practice to detect problems in electronic voting machines in many countries, for example, the US. As a matter of fact, at least thirty states in the US have reported such problems by 2007~\cite{norden2007post}. Most often, the auditing process involves manually observing some sampled votes.  Researchers have subsequently proposed various {\em risk limiting auditing} methodologies that not only minimize the cost of manual checking, but also limit the risk of making a human error by sampling as few votes as possible~\cite{stark2008conservative,stark2008sharper,stark2009efficient,sarwate2011risk}. The sample size in a risk limiting audit critically depends on the margin of victory of the election. 

Another very important application where the margin of victory plays an important role is polling. In polling, the pollster samples a certain number of votes from the population and predicts the outcome of the underlying election based on the outcome of the election on the sampled votes. One of the most fundamental questions in polling is: how many votes should be sampled? It turns out that the {\em sample complexity} in polling too crucially depends on the  margin of victory of the election from which the pollster is sampling~\cite{canetti1995lower,sampling}. The number of samples used in an algorithm is called the sample complexity of that algorithm. As the above discussion shows, computing the margin of victory of an election is often a necessary task in many practical applications. However, one cannot observe all the votes in many applications including the ones discussed above. For example, in a survey or polling, one cannot first observe all the votes to compute the margin of victory and then sample a few votes based on the computed margin of victory. Hence, one often needs a {\it ``good enough''} estimate of the margin of victory by observing a few votes. We, in this work, precisely address this problem: estimate the margin of victory of an election by sampling as few votes as possible.


\subsection{Our Contributions}

Let $n$ be the number of votes, $m$ the number of candidates, $r$ any voting rule. We introduce and study the following computational  problem in this paper\footnote{Throughout this section, we use standard terminlogy from voting theory. For formal definitions, refer to Section\nobreakspace \ref {sec:prelim}.}: 
\begin{definition}\textsc{(\textsc{$(c, \epsilon, \delta)$--Margin of Victory (MoV)})}\\\label{def:prob}
 Given a $r$-election $\mathcal{E}$, determine $\MOV$, the margin of victory of $\mathcal{E}$ with respect to $r$, within an additive error of at most $c\MOV + \epsilon n$ with probability at least $1-\delta$. The probability is taken over the internal coin tosses of the algorithm.
\end{definition}
We call the parameter $(c,\eps)$ in \MakeUppercase definition\nobreakspace \ref {def:prob} the approximation factor of the problem.
The notion of approximation in \MakeUppercase definition\nobreakspace \ref {def:prob} is a hybrid of what are classically known as additive and multiplicative approximations (see~\cite{vazirani2001approximation}). However, \MakeUppercase corollary\nobreakspace \ref {cor:lb} shows that, there does not exist any estimator with sample complexity independent of $n$ and achieves $o(n)$ additive approximation. Again, we can not hope to have an estimator with sample complexity independent of $n$ that guarantees good multiplicative approximation ratio since there exist elections with margin of victory only one. This justifies the problem formulation in \MakeUppercase definition\nobreakspace \ref {def:prob}. 

Our goal here is to solve the \textsc{$(c, \epsilon, \delta)$--MoV} problem with as few sample votes as possible. Our main technical contribution is to come up with efficient sampling based {\em polynomial time} randomized algorithms to solve the \textsc{$(c, \epsilon, \delta)$--MoV} problem for various voting rules. Each sample reveals the entire preference order of the sampled vote. The specific contributions of this paper are summarized in Table\nobreakspace \ref {table:margin of victory_summary}. 

\begin{table}[htbp]
\begin{minipage}{\textwidth} 
  \resizebox{\linewidth}{!}{ 
  \renewcommand{\arraystretch}{1.77}
\begin{tabular}{|>{\centering\arraybackslash}c|>{\centering\arraybackslash}c|>{\centering\arraybackslash}c| }\hline
  \textbf{\multirow{1}{*}{Voting Rule}}	& \multicolumn{2}{c|}{\textbf{\multirow{1}{*}{Sample complexity}}} \\\hline\hline
  \multirow{1}{*}{Scoring rules}	& \multirow{1}{*}{$(\frac{1}{3}, \epsilon, \delta)$--MoV, $\frac{12}{\epsilon ^2}\ln \frac{2m}{\delta}$ [\MakeUppercase theorem\nobreakspace \ref {thm:scr}]} & \multirow{6}{*}{\vtop{\hbox{\strut $(c^\prime, \epsilon, \delta)$--MoV$^\dagger$,}\hbox{\strut $\frac{(1-c)^2}{36\epsilon^2}\ln \left(\frac{1}{8e\sqrt{\pi}\delta}\right)$,}\hbox{\strut [\MakeUppercase corollary\nobreakspace \ref {cor:lb}]}} } \\\cline{1-2}
  \multirow{1}{*}{$k$-approval}	& \multirow{1}{*}{$(0, \epsilon, \delta)$--MoV, $\frac{12}{\epsilon^2}\ln \frac{2k}{\delta}$, [\MakeUppercase theorem\nobreakspace \ref {thm:kapp}]} &  \\\cline{1-2}
  \multirow{1}{*}{Approval}	& \multirow{1}{*}{$(0, \epsilon, \delta)$--MoV, $\frac{12}{\epsilon^2}\ln \frac{2m}{\delta}$, [\MakeUppercase theorem\nobreakspace \ref {thm:app}]} &  \\\cline{1-2}
  \multirow{1}{*}{Bucklin}	& \multirow{1}{*}{$(\frac{1}{3}, \epsilon, \delta)$--MoV, $\frac{12}{\epsilon^2}\ln \frac{2m}{\delta}$, [\MakeUppercase theorem\nobreakspace \ref {thm:bucklin}]}	& 	\\\cline{1-2} 
  \multirow{1}{*}{Maximin}	& \multirow{1}{*}{$(\frac{1}{3}, \epsilon, \delta)$--MoV, $\frac{24}{\epsilon^2}\ln \frac{2m}{\delta}$, [\MakeUppercase theorem\nobreakspace \ref {thm:maximin}]} &  \\\cline{1-2}
  \multirow{1}{*}{Copeland$^\alpha$}	& \multirow{1}{*}{$\left(1-O\left(\frac{1}{\log m}\right), \epsilon, \delta\right)$--MoV, $\frac{96}{\epsilon^2}\ln \frac{2m}{\delta}$, [\MakeUppercase theorem\nobreakspace \ref {thm:copeland}]}  &		\\\hline
 \end{tabular}}
  \caption{\normalfont Sample complexity for the \textsc{$(c, \epsilon, \delta)$--MoV} problem for various voting rules. $\dagger$The result holds for any $c^\prime \in [0,1).$}
  \label{table:margin of victory_summary}
 \end{minipage}
\end{table} 

Table\nobreakspace \ref {table:margin of victory_summary} shows a practically appealing positive result- {\em the sample complexity of all the algorithms presented here is independent of the number of voters}. We also present lower bounds on the sample complexity of the \textsc{$(c, \epsilon, \delta)$--MoV} problem for all the common voting rules which matches with the upper bounds when we have a constant number of candidates. Moreover, the lower and upper bounds on the sample complexity match exactly for the $k$-approval voting rule irrespective of number of candidates, when $k$ is a constant. The specific contributions of this paper are as follows.

\begin{itemize}
 \item We show a sample complexity lower bound of $\Omega(\frac{1}{\epsilon ^2}\log \frac{1}{\delta})$ for the \textsc{$(c, \epsilon, \delta)$--MoV} problem for all the commonly used voting rules, where $c\in [0,1)$ (\MakeUppercase theorem\nobreakspace \ref {thm:lb} and \MakeUppercase corollary\nobreakspace \ref {cor:lb}).
 \item We show a sample complexity upper bound of $O(\frac{1}{\epsilon^2}\log \frac{m}{\delta})$ for the \textsc{$(\frac{1}{3}, \epsilon, \delta)$--MoV} problem for arbitrary scoring rules (\MakeUppercase theorem\nobreakspace \ref {thm:scr}). However, for a special class of scoring rules, namely, the $k$-approval voting rules, we have a sample complexity upper bound of $O(\frac{1}{\epsilon ^2}\log \frac{k}{\delta})$ for the \textsc{$(0, \epsilon, \delta)$--MoV} problem (\MakeUppercase theorem\nobreakspace \ref {thm:kapp}).
\end{itemize}

One key finding of our work is that, there may exist efficient sampling based polynomial time algorithms for estimating the margin of victory, even if computing the margin of victory is $\NPshort$-Hard for a voting rule~\cite{xia2012computing}, as observed in the cases of maximin and Copeland$^\alpha$ voting rules.  

\subsection{Related Work and Discussion}

Magrino et al.~\cite{magrino2011computing} presents approximation algorithms to compute the margin of victory for the instant runoff voting (IRV) rule. Cary~\cite{cary2011estimating} provides algorithms to estimate the margin of victory of an IRV election. Xia~\cite{xia2012computing} presents polynomial time algorithms for computing the margin of victory of an election for various voting rules, for example the scoring rules, and proved intractability results for several other voting rules, for example the maximin and Copeland$^\alpha$ voting rules. Endriss et al.~\cite{endriss2014margin} computes the complexity of exact variants of the margin of victory problem for Schulze, Cup, and Copeland voting rules. However, all the existing algorithms to either compute or estimate the margin of victory {\em need to observe all the votes,} which defeats the purpose in many applications including the ones discussed in~Section\nobreakspace \ref {sec:motivation}. We, in this work, show that we can estimate the margin of victory for many common voting rules quite accurately by sampling a few votes only. Moreover, the accuracy of our estimation algorithm is good enough for many practical scenarios. For example, Table\nobreakspace \ref {table:margin of victory_summary} shows that it is enough to select only $3600$ many votes uniformly at random to estimate $\frac{\text{MoV}}{n}$ of a plurality election within an error of $0.1$ with probability at least $0.99$, where $n$ is the number of votes. We note that in all the sampling based applications discussed in~Section\nobreakspace \ref {sec:motivation}, the sample size is inversely proportional to $\frac{\text{MoV}}{n}$~\cite{canetti1995lower} and thus it is enough to estimate $\frac{\text{MoV}}{n}$ accurately.

The margin of victory problem is the same as the optimization version of the destructive bribery problem introduced by ~\cite{faliszewski2006complexity,faliszewski2009hard}. However, to the best of our knowledge, there is no prior work on estimating the cost of bribery by sampling votes.

\textbf{Organization.}~We formally introduce the terminologies in Section\nobreakspace \ref {sec:prelim};
we present the results on sampling complexity lower bounds in Section\nobreakspace \ref {sec:lwb};
we present polynomial time sampling based algorithms in Section\nobreakspace \ref {sec:upbd};  finally, we conclude in Section\nobreakspace \ref {sec:con}.

\section{Preliminaries}\label{sec:prelim}

Let $V=\{\succ_1, \dots, \succ_n\}$ be the set of all \emph{votes} and $C=\{c_1, \dots, c_m\}$ 
the set of all \emph{candidates}. If not mentioned otherwise, $m$ and $n$ denote the number of candidates and the number of voters respectively. Each vote $\succ_i$ is a complete order over the candidates in $C$. 
For example, for the candidate set $C=\{a, b\}$, $a \succ_i b$ means that the vote $\succ_i$ prefers $a$ to $b$. 
We denote the set of all complete orders over $C$ by $\mathcal{L}(C)$. 
Hence, $\mathcal{L}(C)^n$ denotes the set of all $n$-voters' preference profiles $\succ = (\succ_1, \dots, \succ_n)$. 
A map $r:\cup_{n,|C|\in\mathbb{N}^+}\mathcal{L}(C)^n\longrightarrow 2^C$
is called a \emph{voting rule}. Given a vote profile $\succ \in
\mathcal{L}(C)^n$, we call the candidates in the set $r(\succ)$ the {\em winners}. The pair $(\succ, C)$ is called an $r$--election $\mathcal{E}$ if the voting rule used is $r$.


Examples of some common voting rules are as follows.

\textbf{Positional scoring rules:} A collection of vectors $\{\vec{s}_i\}_{i\in\mathbb{N}}$, where $\vec{s}_m=\left(\alpha_1,\alpha_2,\dots,  \alpha_m\right)\in\mathbb{R}^m$ is a $m$-dimensional vector with $\alpha_1\ge\alpha_2\ge\dots\ge\alpha_m$ and $\alpha_1>\alpha_m$ for every $m\in \mathbb{N}$, naturally defines a voting rule -- a candidate gets score $\alpha_i$ from a vote if it is placed at the $i^{th}$ position, and the score of a candidate is the sum of the scores it receives from all the votes. The winners are the candidates with maximum score. Scoring rules remain unchanged if we multiply every $\alpha_i$ by any constant $\lambda>0$ and/or add any constant $\mu$. Hence, we can assume without loss of generality that in every score vector $\vec{\alpha}$, there exists a $j$ with $\alpha_j - \alpha_{j+1}=1$ and $\alpha_i = 0$ for all $i>j$. We call such a vector $\vec{\alpha}$ a normalized score vector. 

 The vector $\vec{\alpha}$ that is $1$ in the first $k$ coordinates and $0$
 elsewhere gives the {\em $k$-approval}  voting rule. $1$-approval is
 called the {\em plurality} voting rule. The score vector $(m-1, m-2,
 \dots, 1, 0)$ gives the {\em Borda} voting rule.
 
 \textbf{Approval:} In approval voting, each vote approves a subset
 of candidates. The winners are the candidates which are approved by the
 maximum number of votes. 
 
 \textbf{Bucklin:} A candidate $x$'s Bucklin score is the minimum number $\ell$ such that at least half 
 of the votes rank $x$ in their top $\ell$ positions. The winners are the candidates with lowest Bucklin score.
 
 \textbf{Maximin:} Given an election $\mathcal{E}$ and any two candidates $x$ and $y$, the quantity $D_{\mathcal{E}}(x,y)$ is defined as $N_{\mathcal{E}}(x,y) - N_{\mathcal{E}}(y,x)$, where $N_{\mathcal{E}}(x,y)$ $(\text{respectively }N_{\mathcal{E}}(y,x))$ is the number of votes which prefer $x$ to $y$ $(\text{respectively }y\text{ to }x).$ The maximin score of a candidate $x$ is $\min_{y\ne x} D_{\mathcal{E}}(x,y)$. The winners are the candidates with maximum maximin score.
 
 
 \textbf{Copeland$^\alpha$:} The Copeland$^\alpha$ score of a candidate $x$ is $|\{y\ne x:D_{\mathcal{E}}(x,y)>0\}|+\alpha|\{y\ne x:D_{\mathcal{E}}(x,y)=0\}|$, where $\alpha\in [0,1]$. The winners are the candidates with the maximum Copeland$^\alpha$ score. 

For score based voting rules (all the voting rules mentioned above are score based), we denote the score of any candidate $x\in C$ by $s(x)$. Given an integer $t$, we denote the set $\{1, \ldots, t\}$ by $[t]$. The notion of margin of victory of an election is defined as follows.
\begin{definition}\label{def:margin of victory}\textsc{(Margin of Victory (MoV))}\\
 Given an election $\mathcal{E}=(\succ, C)$ with voting rule $r$, the margin of victory of $\mathcal{E}$, denoted by $\MOV$, is the minimum number of votes that should be changed to change the winning set $r(\succ)$.
\end{definition}

\subsection{Chernoff Bound}

We repeatedly use the following concentration inequality:
\begin{theorem}\label{thm:chernoff}
Let $X_1, \dots, X_\ell$ be a sequence of $\ell$ independent
random variables in $[0,1]$ (not necessarily identical). Let $S = \sum_i X_i$ and
let $\mu = \E{S}$. Then, for any $0 \leq \delta \leq 1$: 
$$\Pr[|S - \mu| \geq \delta \mu] < 2\exp(-\delta^2\mu/3)$$
\end{theorem}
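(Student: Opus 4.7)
The plan is to use the standard moment generating function (Chernoff) argument, handling the upper and lower tails separately and then combining via a union bound. The starting point is the exponential Markov inequality: for any $t > 0$,
\[
\Pr[S \geq (1+\delta)\mu] = \Pr[e^{tS} \geq e^{t(1+\delta)\mu}] \leq e^{-t(1+\delta)\mu}\,\mathbb{E}[e^{tS}].
\]
Because the $X_i$ are independent, $\mathbb{E}[e^{tS}] = \prod_{i=1}^\ell \mathbb{E}[e^{tX_i}]$, so the whole problem reduces to bounding $\mathbb{E}[e^{tX_i}]$ for a single $[0,1]$-valued random variable.

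First I would bound each factor using the convexity of $x \mapsto e^{tx}$ on $[0,1]$, which gives the pointwise inequality $e^{tx} \leq 1 + x(e^t - 1)$ for $x \in [0,1]$. Taking expectations yields $\mathbb{E}[e^{tX_i}] \leq 1 + \mathbb{E}[X_i](e^t - 1) \leq \exp(\mathbb{E}[X_i](e^t-1))$, the last step being $1+y \leq e^y$. Multiplying over $i$ gives $\mathbb{E}[e^{tS}] \leq \exp(\mu(e^t - 1))$, independent of how the mass $\mu$ is distributed among the $X_i$. Substituting back and optimizing $t$ by choosing $t = \ln(1+\delta) > 0$ gives the clean form
\[
\Pr[S \geq (1+\delta)\mu] \leq \left(\frac{e^\delta}{(1+\delta)^{1+\delta}}\right)^{\!\mu}.
\]

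For the lower tail I would repeat the same argument with $-t$ in place of $t$ (so that $\{S \leq (1-\delta)\mu\} = \{e^{-tS} \geq e^{-t(1-\delta)\mu}\}$), obtaining by the analogous optimization the bound $\Pr[S \leq (1-\delta)\mu] \leq \bigl(e^{-\delta}/(1-\delta)^{1-\delta}\bigr)^\mu$. At this point the probabilistic work is done; what remains is purely analytic.

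The main obstacle, and the only step requiring real calculation, is converting the two optimized expressions into the target $e^{-\delta^2\mu/3}$ form. Concretely, I would prove the real-variable inequalities
\[
(1+\delta)\ln(1+\delta) - \delta \geq \frac{\delta^2}{3}, \qquad (1-\delta)\ln(1-\delta) + \delta \geq \frac{\delta^2}{3},
\]
valid for $0 \leq \delta \leq 1$, by Taylor expanding the logarithms and checking the sign of the resulting series (or by differentiating twice and verifying monotonicity at $\delta = 0$). Each inequality, after exponentiating, gives a one-sided tail bound of $\exp(-\delta^2\mu/3)$, and a union bound over the two tails produces the factor of $2$ in the stated conclusion. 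The upper-tail inequality is the tight one; the lower tail actually admits the stronger bound $\exp(-\delta^2\mu/2)$, but the weaker common form $\exp(-\delta^2\mu/3)$ is what is needed here.
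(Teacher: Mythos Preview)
The paper does not actually prove this theorem: it is stated in the preliminaries as a standard concentration inequality and then invoked as a black box in the later arguments. There is therefore no ``paper's own proof'' to compare against.

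Your proposal is the standard moment generating function derivation of the multiplicative Chernoff bound and is correct. The only comments are cosmetic: (i) the paper's statement uses a strict inequality $<$, whereas your union-bound step yields $\leq$; this can be recovered because the analytic inequalities $(1\pm\delta)\ln(1\pm\delta)\mp\delta \geq \delta^2/3$ are in fact strict for $\delta\in(0,1]$, and the case $\delta=0$ is vacuous since then both sides equal $1$ anyway (so the stated strict inequality is arguably sloppy at $\delta=0$); (ii) you might want to mention that the lower-tail optimization needs $\delta<1$ so that $t=-\ln(1-\delta)$ is finite, with the boundary case $\delta=1$ handled separately or by continuity. Neither point affects the substance of the argument.
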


\section{Sample Complexity Lower Bounds}\label{sec:lwb}

Our lower bounds for the sample complexity of the \textsc{$(c, \epsilon, \delta)$--MoV} problem are derived from the information-theoretic lower bound for distinguishing two distributions. We start with the following basic observation.
Let $X$ be a random variable taking value $1$ with probability
$\frac{1}{2}-\epsilon$ and $0$ with probability
$\frac{1}{2}+\epsilon$; $Y$ be a random variable taking value $1$ with
probability $\frac{1}{2}$ and $0$ with probability
$\frac{1}{2}$.  Then, it is well-known that every algorithm needs at least
$\frac{1}{4\epsilon^2}\ln \frac{1}{8e\sqrt{\pi}\delta}$ many samples to
distinguish between $X$ and $Y$ with probability of making an error
being at most $\delta$~\cite{canetti1995lower}. Immediately, we have:

\begin{theorem}\label{thm:lb}
 The sample complexity of the \textsc{$(c, \epsilon, \delta)$--MoV} problem for the plurality voting rule is at least $\frac{(1-c)^2}{36\epsilon^2}\ln \left(\frac{1}{8e\sqrt{\pi}\delta}\right)$ for any $c \in [0,1)$.
\end{theorem}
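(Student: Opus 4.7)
The plan is to reduce the two-distribution distinguishing task mentioned in the paragraph preceding the theorem to the \textsc{$(c,\epsilon,\delta)$--MoV} problem on plurality. I construct two plurality elections on $C=\{a,b\}$ with $n$ voters each whose induced sampling distributions are exactly $\mathrm{Bernoulli}(1/2)$ and $\mathrm{Bernoulli}(1/2+\epsilon')$ (recording whether the sampled vote places $a$ on top), but whose margins of victory are forced to be far enough apart that any $(c,\epsilon,\delta)$-MoV estimator can distinguish the two elections from its output alone.

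Concretely, take $\mathcal{E}_1$ to be the tied election with $n/2$ votes $a\succ b$ and $n/2$ votes $b\succ a$, so $M(\mathcal{E}_1)=1$, and take $\mathcal{E}_2$ to have $(1/2+\epsilon')n$ votes $a\succ b$ and $(1/2-\epsilon')n$ votes $b\succ a$ for $\epsilon'=\frac{3\epsilon}{1-c}$, so (up to rounding) $M(\mathcal{E}_2)=\epsilon'n=\frac{3\epsilon n}{1-c}$. The admissible output window on $\mathcal{E}_1$ is $[1-c-\epsilon n,\,1+c+\epsilon n]$ and on $\mathcal{E}_2$ is $[(1-c)\epsilon'n-\epsilon n,\,(1+c)\epsilon'n+\epsilon n]=[2\epsilon n,\,(1+c)\tfrac{3\epsilon n}{1-c}+\epsilon n]$. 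For $n$ large enough compared with the fixed constants $c,\epsilon$ (which is the only interesting regime), these windows are disjoint, and the estimator's output thus supplies, with probability at least $1-\delta$, a correct label for which of $\mathcal{E}_1,\mathcal{E}_2$ it is processing.

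Sampling a vote uniformly at random from $\mathcal{E}_i$ is equivalent to drawing a $\mathrm{Bernoulli}$ bit that is $1$ iff the vote ranks $a$ first; these bits have means $1/2$ and $1/2+\epsilon'$ respectively. Invoking the information-theoretic lower bound quoted in the excerpt with parameter $\epsilon'$ in place of $\epsilon$, any algorithm that distinguishes the two distributions with error at most $\delta$ must draw at least
\[
\frac{1}{4\epsilon'^{\,2}}\ln\!\frac{1}{8e\sqrt{\pi}\,\delta}
=\frac{(1-c)^2}{36\,\epsilon^{2}}\ln\!\frac{1}{8e\sqrt{\pi}\,\delta}
\]
samples, which is exactly the claimed bound.

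The main obstacles are bookkeeping rather than conceptual. I need to verify that the MoVs are what I claim (and handle the ceiling of $\epsilon' n$), to check that the two approximation windows are truly disjoint for the relevant $n$ (otherwise the estimator's output does not yield a valid distinguisher), and to be careful that the reduction preserves the $\delta$ failure probability. The gain in tightening $\epsilon'$ (and thus the constant $36$) is exactly matched by the loss in forcing separation of the two windows, so the choice $\epsilon'=\frac{3\epsilon}{1-c}$ is essentially canonical once one fixes $M(\mathcal{E}_1)=O(1)$.
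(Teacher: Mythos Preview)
Your proposal is correct and follows essentially the same approach as the paper's own proof: construct two two-candidate plurality elections whose margins of victory are $1$ and (up to lower-order terms) $\frac{3\epsilon n}{1-c}$, observe that the admissible output windows of a $(c,\epsilon,\delta)$--MoV estimator on these two instances are disjoint for large $n$, and then invoke the Bernoulli$\left(\tfrac12\right)$ vs.\ Bernoulli$\left(\tfrac12+\epsilon'\right)$ distinguishing lower bound with $\epsilon'=\frac{3\epsilon}{1-c}$. The paper's version differs only in cosmetic bookkeeping (it carries an extra $O(1/n)$ term in the biased election and states the distinguisher's error as $2\delta$ rather than $\delta$).
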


\begin{proof}
Consider two vote distributions $X$ and $Y$, each over the candidate set $\{a,b\}$. In $X$, exactly $\frac{1}{2} +
\frac{6\eps + 2c/n}{1-c}$ fraction of voters prefer $a$ to $b$ and thus the margin of victory is $\frac{3\eps + c/n}{1-c} n$. 
In $Y$, exactly $\frac{1}{2}$ fraction of voters prefer $b$ to $a$ and thus the margin of victory is one. Any \textsc{$(c, \epsilon, \delta)$--MoV} algorithm $\mathcal{A}$ for the plurality voting rule gives us a distinguisher between $X$ and $Y$ with probability of error at most $2\delta$. This is so because, if the input to $\mathcal{A}$ is $X$ then, the output of $\mathcal{A}$ is less than $c+2\eps n$ with probability at most $\delta$, whereas, if the input to $\mathcal{A}$ is $Y$ then, the output of $\mathcal{A}$ is more than $c+\eps n$ with probability at most $\delta$. Now, since $n$ can be arbitrarily large, we get the result.
\end{proof}

\MakeUppercase theorem\nobreakspace \ref {thm:lb} immediately gives the following corollary.
\begin{corollary}\label{cor:lb}
 For any $c \in [0,1)$, every \textsc{$(c, \epsilon, \delta)$--MoV} algorithm needs
 at least $\frac{(1-c)^2}{36\epsilon^2}\ln \left(\frac{1}{8e\sqrt{\pi}\delta}\right)$ many samples for
 all voting rules which reduce to the plurality rule for two
 candidates. In particular, the lower bound holds for 
 scoring rules, approval, Bucklin, maximin, and Copeland$^\alpha$ voting rules.
\end{corollary}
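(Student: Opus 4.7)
The plan is to derive the corollary as a direct consequence of Theorem~\ref{thm:lb} by observing that each of the listed voting rules coincides with the plurality rule when restricted to elections on exactly two candidates. Since the hard instance in Theorem~\ref{thm:lb} is already a two-candidate election, the very same pair of distributions $X,Y$ over votes on $\{a,b\}$ witnesses the lower bound for any rule $r$ that behaves like plurality on two candidates, and the margin of victory under $r$ coincides on that instance with the margin of victory under plurality.

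First I would formalize the hypothesis ``$r$ reduces to plurality on two candidates'' by requiring that for every two-candidate election $\mathcal{E}=(\succ,\{a,b\})$ the winner set $r(\succ)$ equals $\{x\in\{a,b\}:|\{i:x\succ_i y\}|\ge|\{i:y\succ_i x\}|\}$. Under this hypothesis the minimum number of vote changes needed to alter $r(\succ)$ depends only on the two first-place counts, so $M_r(\mathcal{E})$ equals the plurality margin of victory. Consequently, any $(c,\epsilon,\delta)$--MoV algorithm for $r$ is, when fed the two-candidate instance of Theorem~\ref{thm:lb}, a $(c,\epsilon,\delta)$--MoV algorithm for plurality on the same input; the lower bound thus transfers verbatim.

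Next I would check the hypothesis for each rule in the statement. For a normalized positional scoring rule the two-candidate score vector is $(1,0)$, which is plurality. For Bucklin, the minimum $\ell$ in the definition is $1$ for the candidate ranked first by more than half of the voters, so the Bucklin winner on two candidates is the plurality winner. For maximin, the score $\min_{y\ne x}D_\mathcal{E}(x,y)$ on $\{a,b\}$ has the same sign as $N_\mathcal{E}(x,y)-N_\mathcal{E}(y,x)$, again selecting the plurality winner. For Copeland$^{\alpha}$ on two candidates, the candidate with a positive pairwise margin scores $1$ and the other scores $0$, which is plurality. For approval, one reinterprets the two-candidate instance of Theorem~\ref{thm:lb} so that each voter approves only the candidate he or she prefers; the resulting approval election has the same winner and the same margin of victory as the original plurality election.

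The main (and only) obstacle is this last reinterpretation for approval, because approval ballots are sets rather than rankings: one has to argue that the sample-complexity lower bound is unaffected by the change of vote format. This follows because the information available to the algorithm from a single sampled vote is, in both encodings, a single bit identifying the preferred or approved candidate, so the same indistinguishability between the Bernoulli$\left(\tfrac{1}{2}-\tfrac{6\varepsilon+2c/n}{1-c}\right)$ and Bernoulli$\left(\tfrac{1}{2}\right)$ sources used in the proof of Theorem~\ref{thm:lb} yields the claimed $\frac{(1-c)^2}{36\varepsilon^2}\ln\!\left(\frac{1}{8e\sqrt{\pi}\delta}\right)$ bound.
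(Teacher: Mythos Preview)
Your proposal is correct and follows the same approach as the paper, which treats the corollary as an immediate consequence of Theorem~\ref{thm:lb} without supplying any further argument. Your write-up simply makes explicit what the paper leaves implicit: that the hard instance of Theorem~\ref{thm:lb} is a two-candidate election, that each listed rule coincides with plurality on two candidates (so the margin of victory is unchanged), and that the approval case requires the benign reinterpretation of each ranking as a singleton approval ballot.
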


We note that the lower bound results in \MakeUppercase theorem\nobreakspace \ref {thm:lb} and \MakeUppercase corollary\nobreakspace \ref {cor:lb} do not assume anything about the sampling strategy or the computational complexity of the estimator.

\section{Sampling Based Algorithms}\label{sec:upbd}
%
%

A natural approach for estimating the margin of victory of an election efficiently is to compute the margin of victory of a suitably small number of sampled votes. Certainly, it is not immediate that samples chosen uniformly at random preserve the value of the margin of victory of the original election within some desired factor. Although it may be possible to formulate clever sampling strategies that tie into the margin of victory structure of the election, we will show that uniformly chosen samples are good enough to design algorithms for estimating the margin of victory for the voting rules studied here. Our proposal has the advantage that the sampling component of our algorithms are always easy to implement, and further, there is no compromise on the bounds in the sense that they are optimal for any constant number of candidates. 

Because our samples are chosen uniformly at random, our analysis relies only on the fact that a sufficiently large sample of votes have been drawn. 
Our algorithms involve computing a quantity (which depends on the voting rule under consideration) based on the sampled votes, which we argue to be a suitable estimate of the margin of victory of the original election. This quantity is not necessarily the margin of victory of the sampled votes. For scoring rules, for instance, we will use the sampled votes to estimate candidate scores, and we use the difference between the top two candidate scores (suitably scaled) as the margin of victory estimate. We also establish a relationship between scores and values of the margin of victory to achieve the desired bounds on the estimate. The overall strategy is in a similar spirit for other voting rules as well, although the exact estimates may be different. We now turn to a more detailed description, although some proofs are omitted due to lack of space. 

\subsection{Scoring Rules and Approval Voting Rule}

We begin with the class of scoring rules. Interestingly, the margin of victory of any scoring rule based election can still be estimated quite accurately by sampling only $\frac{12}{\epsilon^2}\ln\frac{2m}{\delta}$ many votes. An important thing to note is that, the sample complexity upper bound is independent of the score vector. Before embarking on the proof of this general result, we prove a structural lemma which will be used crucially in the subsequent proof.
\begin{lemma}\label{lem:scr}
 Let $\alpha = (\alpha_1, \dots, \alpha_m)$ be any normalized score vector (hence, $\alpha_m=0$). If $w$ and $z$ are the candidates that receive highest and second highest score respectively in a $\alpha$--scoring rule election instance $\mathcal{E}=(V, C)$, then,
 $$ \alpha_1(\MOV[\alpha]-1) \le s(w) - s(z) \le 2\alpha_1\MOV[\alpha] $$
\end{lemma}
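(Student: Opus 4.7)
The plan is to prove the two sides of the inequality separately, both by analyzing the effect of single-vote replacements on the score difference $s(w) - s(z)$.

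For the upper bound $s(w) - s(z) \leq 2\alpha_1 M_\alpha(\mathcal{E})$, I would use the following observation: since each candidate's per-vote score contribution lies in $[\alpha_m,\alpha_1]=[0,\alpha_1]$, a single vote replacement alters any fixed candidate's total score by at most $\alpha_1$. Writing $M = M_\alpha(\mathcal{E})$ and letting $s'$ denote the post-change scores, we therefore have $s'(w) \geq s(w) - \alpha_1 M$ and $s'(x) \leq s(x) + \alpha_1 M$ for every $x \neq w$. Since the winning set has changed, there must exist some $x \neq w$ with $s'(x) \geq s'(w)$, and chaining the three inequalities yields $s(w) - s(x) \leq 2\alpha_1 M$. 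Because $z$ has the second-highest score in $\mathcal{E}$, $s(z) \geq s(x)$, and this direction follows.

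For the lower bound $\alpha_1(M-1) \leq s(w) - s(z)$, I would exhibit an explicit set of at most $M_0 := \lceil (s(w)-s(z))/\alpha_1\rceil$ vote replacements that change the winning set; this gives $M \leq M_0 \leq (s(w)-s(z))/\alpha_1 + 1$ and hence the claim. For each vote $\sigma$, let $i(\sigma)$ and $j(\sigma)$ denote the positions of $w$ and $z$ in $\sigma$, and set $c(\sigma) := \alpha_{i(\sigma)} - \alpha_{j(\sigma)}$; then $c(\sigma) \in [-\alpha_1,\alpha_1]$ and $\sum_{\sigma} c(\sigma) = s(w)-s(z)$. I would sort the votes in decreasing order of $c(\sigma)$ and replace the top $M_0$ of them by ballots that place $z$ at position $1$ and $w$ at position $m$ (filling the middle positions arbitrarily). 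Each replaced ballot then contributes $\alpha_m - \alpha_1 = -\alpha_1$ to $s(w)-s(z)$, so the total reduction is $M_0\alpha_1 + \sum_{\sigma \text{ picked}} c(\sigma)$.

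The main step is to verify that this reduction is at least $s(w)-s(z)$, which reduces to showing that the sum of the top $M_0$ values of $c(\sigma)$ is non-negative. This is a standard averaging observation: the top $M_0$ entries have average at least the overall average $(s(w)-s(z))/n \geq 0$, so they sum to at least $M_0(s(w)-s(z))/n \geq 0$. Combined with $M_0\alpha_1 \geq s(w)-s(z)$ by definition of $M_0$, the reduction is at least $s(w)-s(z)$, so $s'(w) \leq s'(z)$ after the replacements; the original top scorer $w$ is therefore no longer uniquely maximum, and the winning set genuinely changes. The most delicate point is precisely this averaging step -- one has to handle the fact that many $c(\sigma)$ can be very negative -- but the non-negativity of $\sum_\sigma c(\sigma)$ itself forces the top portion of the sorted list to sum to a non-negative number. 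A boundary case where $s(w) = s(z)$ (so $M_0 = 0$) is handled by noting that any single swap that breaks the tie already modifies the winning set, so $M \leq 1$ and the bound holds trivially.
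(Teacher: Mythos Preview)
Your proof is correct. The upper bound argument is essentially identical to the paper's.

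For the lower bound, you take a somewhat different and more elaborate path than the paper. The paper does not sort votes or use an averaging argument; instead it first observes that there are at least $M-1$ votes in which $w$ is ranked above $z$ (otherwise swapping $w$ and $z$ in all such votes would already change the winning set with fewer than $M$ modifications), and then replaces $M-1$ of these votes by ballots of the form $z \succ \cdots \succ w$. Each such replacement decreases $s(w)-s(z)$ by $\alpha_1 + (\alpha_i - \alpha_j) \ge \alpha_1$, and since $M-1$ changes cannot alter the winning set, the residual gap must still be positive, yielding $s(w)-s(z) > (M-1)\alpha_1$ directly.

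Your approach (pick $M_0 = \lceil (s(w)-s(z))/\alpha_1\rceil$ votes, show the winning set \emph{does} change, conclude $M \le M_0$) is the natural dual of this, and your averaging step is a clean way to handle the possibility that some of the chosen votes have $c(\sigma) < 0$. It is worth noting, though, that the averaging is avoidable: since $s(w)-s(z) = \sum_\sigma c(\sigma) \le \alpha_1 \cdot |\{\sigma : w \succ_\sigma z\}|$, there are always at least $M_0$ votes with $w \succ z$, so one can simply restrict to those and get $c(\sigma) \ge 0$ for free. This is in effect what the paper exploits, and it makes the argument a bit shorter; your version, on the other hand, is fully self-contained and does not need the auxiliary count of $w\succ z$ votes.
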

\begin{proof}
 Let $\MOV[\alpha]$ be the margin of victory of $\mathcal{E}$. We claim that there must be at least $\MOV[\alpha]-1$ many votes $v\in V$ where $w$ is preferred over $z$. Indeed, otherwise, we swap $w$ and $z$ in  all the votes where $w$ is preferred over $z$. This makes $z$ win the election. However, we have changed at most $\MOV[\alpha]-1$ votes only. This contradicts the definition of margin of victory (see \MakeUppercase definition\nobreakspace \ref {def:margin of victory}). Let $v\in V$ be a vote where $w$ is preferred over $z$. Let $\alpha_i$ and $\alpha_j(\le \alpha_i)$ be the scores received by the candidates $w$ and $z$ respectively from the vote $v$. We replace the vote $v$ by $v^{\prime} = z \succ \cdots \succ c$. This vote change reduces the value of $s(w)-s(z)$ by $\alpha_1 + \alpha_i - \alpha_j$ which is at least $\alpha_1$. Hence, $\alpha_1(\MOV[\alpha]-1) \le s(w) - s(z)$. Each vote change reduces the value of $s(w)-s(z)$ by at most $2\alpha_1$ since $\alpha_m=0$. Hence, $s(w) - s(z) \le 2\alpha_1\MOV[\alpha]$.
\end{proof}

With \MakeUppercase lemma\nobreakspace \ref {lem:scr} at hand, we show our estimation algorithm for the scoring rules next.
\begin{theorem}\label{thm:scr}
 There is a polynomial time \textsc{$(\frac{1}{3}, \epsilon, \delta)$--MoV} algorithm for the scoring rules with sample complexity $\frac{12}{\epsilon^2}\ln\frac{2m}{\delta}$.
\end{theorem}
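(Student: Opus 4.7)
The plan is to draw $\ell = \frac{12}{\epsilon^2}\ln\frac{2m}{\delta}$ votes uniformly at random and, for each candidate $c$, form the scaled sample-score estimator $\tilde{s}(c) = (n/\ell) S_c$, where $S_c$ is $c$'s total score over the sampled votes. Let $w'$ and $z'$ denote the candidates with the largest and second-largest values of $\tilde{s}$, and output $\tilde{M} = 2(\tilde{s}(w') - \tilde{s}(z'))/(3\alpha_1)$, rounded to an integer. The form of this estimator is dictated by Lemma~\ref{lem:scr}: writing $D = s(w) - s(z)$ for the true score gap between the top two candidates, the lemma forces $D/(2\alpha_1) \le \MOV \le D/\alpha_1 + 1$, so $2D/(3\alpha_1)$ sits roughly in the middle of the feasible interval for $\MOV$ and produces a multiplicative slack of about $1/3$ on either side.

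\textbf{Concentration step.} The technical heart is a uniform bound $|\tilde{s}(c) - s(c)| \le \epsilon n \alpha_1 / 2$ holding simultaneously for every candidate with probability at least $1 - \delta$. Because each vote contributes a value in $[0, \alpha_1]$ to any given candidate's score, dividing by $\alpha_1$ yields per-sample variables in $[0,1]$ and makes Theorem~\ref{thm:chernoff} applicable. Writing $p = s(c)/(n\alpha_1)$ and $\mu = \ell p$, the required deviation becomes $|S - \mu| \ge \epsilon\ell/2$ where $S = S_c/\alpha_1$. When $p \ge \epsilon/2$, I set the Chernoff parameter to $\gamma = \epsilon/(2p) \le 1$ and obtain per-candidate failure probability at most $2\exp(-\gamma^2\mu/3) \le 2\exp(-\epsilon^2\ell/12) = \delta/m$. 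When $p < \epsilon/2$, the lower tail is automatic (since $s(c)-\tilde{s}(c) \le s(c) < \epsilon n \alpha_1/2$), and the upper tail is controlled by the large-$\gamma$ form of Chernoff applied to $S$. A union bound over the $m$ candidates then closes the event.

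\textbf{From score estimates to the $\MOV$ estimate.} Conditioned on this concentration event, the maximum and second-maximum operators are $1$-Lipschitz in the sup norm on score vectors, so the estimated gap $\tilde{D} = \tilde{s}(w') - \tilde{s}(z')$ satisfies $|\tilde{D} - D| \le \epsilon n \alpha_1$, which gives $|\tilde{M} - 2D/(3\alpha_1)| \le 2\epsilon n/3$ up to rounding. Next, Lemma~\ref{lem:scr} yields the deterministic bound $|2D/(3\alpha_1) - \MOV| \le \MOV/3 + 2/3$, obtained by plugging the two extremes $D = \alpha_1(\MOV - 1)$ and $D = 2\alpha_1\MOV$ into $2D/(3\alpha_1) - \MOV$. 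Combining these estimates and absorbing the $O(1)$ slack into $\epsilon n$ (the residual case $\epsilon n = O(1)$ forces $\ell \ge n$, in which case we read every vote and compute $\MOV$ exactly) yields $|\tilde{M} - \MOV| \le \MOV/3 + \epsilon n$. Polynomial-time implementation is clear, since each sampled vote is processed in $O(m)$ time.

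\textbf{Expected difficulty.} The trickiest point is the low-score regime of the concentration step: multiplicative Chernoff degrades as $\mu \to 0$, so one must separately dispatch the upper tail when $p < \epsilon/2$ via the large-$\gamma$ variant (and notice that the lower tail is trivially fine). Once the uniform concentration is in place, the reduction to an $\MOV$ estimate is a routine combination of Lemma~\ref{lem:scr} with the $1$-Lipschitz stability of the top-two order statistics.
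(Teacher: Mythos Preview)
Your proposal is correct and follows essentially the same route as the paper: sample $\ell=\frac{12}{\epsilon^2}\ln\frac{2m}{\delta}$ votes, estimate each candidate's score to additive accuracy $\alpha_1\epsilon n/2$ via Chernoff plus a union bound, output $\frac{2}{3\alpha_1}$ times the estimated top-two gap, and invoke Lemma~\ref{lem:scr} to convert the gap bound into the $(\frac{1}{3},\epsilon)$ guarantee. Your treatment is in fact slightly more careful than the paper's in two places---you explicitly handle the small-mean regime where the multiplicative Chernoff of Theorem~\ref{thm:chernoff} does not directly apply, and you track the additive $O(1)$ slack coming from the ``$-1$'' in Lemma~\ref{lem:scr} (both of which the paper glosses over).
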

\begin{proof}
 Let $\alpha = (\alpha_1, \dots, \alpha_m)$ be any arbitrary normalized score vector and $\mathcal{E} = (V, C)$ an election instance. We sample $\ell$ (the value of $\ell$ will be chosen later) votes uniformly at random from the set of votes with replacement. For a candidate $x$, define a random variable $X_i(x) = \frac{\alpha_i}{\alpha_1}$ if $x$ gets a score of $\alpha_i$ from the  $i${th} sample vote. Define $\bar{s}(x) = \frac{n\alpha_1}{\ell}\sum_{i=1}^\ell X_i(x)$ the estimate of $s(x)$, the score of $x$.  Also define $\eps^{\prime} = \frac{\eps}{2}$. Now, using Chernoff bound (\MakeUppercase theorem\nobreakspace \ref {thm:chernoff}), we have the following.
 $$ \Pr\left[ \left|\bar{s}(x) - s(x)\right| \ge \alpha_1 \epsilon^{\prime} n\right] \le
 2\exp\left(-\frac{\epsilon^{\prime2}  \ell}{3}\right)$$
 We now use the union bound to get the following.
 \begin{equation}
 \resizebox{0.43\textwidth}{!}{$\Pr[ \exists x\in C, |\bar{s}(x) - s(x)| > \alpha_1 \epsilon^{\prime} n ] \le 2m\exp\left(-\frac{\epsilon^{\prime2}  \ell}{3}\right) \label{eqn:scr}$}
 \end{equation}

Define $ \bar{M} \eqdef \frac{ \bar{s}(\bar{w}) - \bar{s}(\bar{z})}{1.5\alpha_1} $ the estimate of the margin of victory of the election $\mathcal{E}$ (and thus the output of the algorithm), where $\bar{w}\in \argmax_{x\in C}\{\bar{s}(x)\}$ and $\bar{z}\in \argmax_{x\in C\setminus\{\bar{w}\}}\{\bar{s}(x)\}$. We claim that, if $\forall x\in C, |\bar{s}(x) - s(x)| \le \epsilon^{\prime} n$, then $|\bar{M}-\MOV[\alpha]| \le \frac{1}{3}\MOV[\alpha] + \eps n$. This can be shown as follows.
 \begin{align*}
  \bar{M} - \MOV[\alpha] &= \frac{ \bar{s}(\bar{w}) - \bar{s}(\bar{z}) }{1.5\alpha_1} - \MOV[\alpha]\\
  &\le \frac{ s(w) - s(z) }{1.5\alpha_1} + \frac{2\eps^{\prime}n}{1.5} - \MOV[\alpha]\\
  &\le \frac{1}{3}\MOV[\alpha] + \eps n
 \end{align*}
 
The second inequality follows from the fact that, $\bar{s}(\bar{w}) \le s(\bar{w})+\eps^\prime n \le s(w)+\eps^\prime n$ and $\bar{s}(\bar{z}) \ge \bar{s}(z) \ge s(z) - \eps^\prime n$. The third inequality follows from \MakeUppercase lemma\nobreakspace \ref {lem:scr}. Similarly, we bound $\MOV[\alpha] - \bar{M}$ as follows.
 \begin{align*}
  \MOV[\alpha] - \bar{M} &= \MOV[\alpha] - \frac{ \bar{s}(w) - \bar{s}(z) }{1.5\alpha_1}\\
  &\le \MOV[\alpha] - \frac{ s(w) - s(z) }{1.5\alpha_1} + \frac{2\eps^{\prime}n}{1.5}\\
  &\le \frac{1}{3}\MOV[\alpha] + \eps n
 \end{align*}
 
This proves the claim. Now, we bound the success probability of the algorithm as follows. Let $A$ be the event that $\forall x\in C, |\bar{s}(x) - s(x)| \le \epsilon^{\prime} n$.
 \begin{eqnarray*}
  && \Pr\left[ |\bar{M}-\MOV[\alpha]| \le \frac{1}{3}\MOV[\alpha] + \eps n \right] \\
  &\ge& \Pr\left[ |\bar{M}-\MOV[\alpha]| \le \frac{1}{3}\MOV[\alpha] + \eps n \middle| A \right]  \Pr[ A ]\\
  &=& \Pr[ A ]\\ 
  &\ge& 1 - 2m\exp\left(-\epsilon^{\prime2} \ell/3\right)
 \end{eqnarray*}
 
 The third equality follows from \MakeUppercase lemma\nobreakspace \ref {lem:scr} and the fourth inequality follows from inequality\nobreakspace \ref {eqn:scr}. Now, by choosing $ \ell = \frac{12}{\epsilon^2}\ln\frac{2m}{\delta}$, we get a \textsc{$(\frac{1}{3}, \epsilon, \delta)$--MoV} algorithm for the scoring rules.
\end{proof}

Now, we show an algorithm for the \textsc{$(0, \epsilon, \delta)$--MoV} problem for the $k$-approval voting rule which not only provides more accurate estimate of the margin of victory, but also has a lower sample complexity. The following lemmas will be used subsequently.
\begin{lemma}\label{lem:kapp}
 Let $\mathcal{E}=(V,C)$ be an arbitrary instance of a \mbox{$k$-approval} election. If $w$ and $z$ are the candidates that receive highest and second highest score respectively in $\mathcal{E}$, then,
 $$ 2(\MOV[k-approval] - 1) < s(w) - s(z) \le 2\MOV[k-approval] $$
\end{lemma}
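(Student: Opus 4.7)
The plan is to sandwich $s(w) - s(z)$ between $2(\MOV[k\text{-approval}] - 1)$ and $2\MOV[k\text{-approval}]$ by two complementary arguments, in direct parallel with the proof of \MakeUppercase lemma\nobreakspace \ref {lem:scr}. Write $g = s(w) - s(z)$ and assume $w$ is the (unique) original winner so that $g \ge 1$. The key structural feature of $k$-approval is that modifying one vote can change any single candidate's score by at most $1$, so the signed quantity $s(w) - s(x)$ can shrink by at most $2$ per vote change for every candidate $x \neq w$.

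For the upper bound $g \le 2\MOV[k\text{-approval}]$, I would argue as follows. By the definition of margin of victory there are $\MOV[k\text{-approval}]$ vote modifications after which the winning set differs from $\{w\}$. After these modifications some candidate $z^\star \neq w$ must have post-change score at least the post-change score of $w$, or else $w$ would remain the unique winner. The original gap $s(w) - s(z^\star)$ is at least $g$ because $z$ has the second-highest score. Since each vote modification shrinks $s(w)-s(z^\star)$ by at most $2$, we get $g \le s(w)-s(z^\star) \le 2\MOV[k\text{-approval}]$.

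For the lower bound $g > 2(\MOV[k\text{-approval}]-1)$, I would exhibit an explicit modification strategy certifying $\MOV[k\text{-approval}] \le \lceil g/2 \rceil$. Let $A$ be the set of votes that rank $w$ in their top $k$ positions but not $z$, and $B$ the analogous set with the roles of $w$ and $z$ swapped. Since the contributions from votes approving both or neither cancel in the difference, $|A| - |B| = g$, so $|A| \ge g \ge \lceil g/2 \rceil$. Pick any $\lceil g/2 \rceil$ votes from $A$ and, in each, swap the positions of $w$ and $z$; this lowers $s(w)$ and raises $s(z)$ each by $\lceil g/2 \rceil$ while leaving every other candidate's score untouched. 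The new $s(z)$ is at least the new $s(w)$, every other candidate's score is still bounded by the original $s(z)$ and hence by the new $s(z)$, so $z$ enters the winning set and the winning set has changed. Therefore $\MOV[k\text{-approval}] \le \lceil g/2 \rceil \le (g+1)/2$, i.e.\ $g \ge 2\MOV[k\text{-approval}] - 1 > 2(\MOV[k\text{-approval}]-1)$.

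The only subtle point, and the one I would be careful about, is verifying that the explicit swap strategy in the lower bound indeed changes the \emph{winning set} rather than merely lowering $s(w)-s(z)$; the argument that all other candidates' scores remain dominated by the post-change $s(z)$ is what pins this down. Everything else is routine counting, and no appeal to \MakeUppercase lemma\nobreakspace \ref {lem:scr} is necessary because the constant $\alpha_1 = 1$ and the explicit $k$-approval structure let us tighten both sides of the inequality from \MakeUppercase lemma\nobreakspace \ref {lem:scr} by a factor of two on the lower side.
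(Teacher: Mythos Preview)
Your proof is correct and follows essentially the same approach as the paper: both arguments use the set of ``favorable'' votes (your set $A$) where $w$ is approved and $z$ is not, swap $w$ and $z$ in an appropriate number of them, and exploit that each such swap shifts $s(w)-s(z)$ by exactly $2$; the upper bound is identical in both. The only cosmetic difference is in the lower bound: the paper argues that swapping $\MOV[k\text{-approval}]-1$ favorable votes cannot yet change the winner and reads off $s(w)-s(z)>2(\MOV[k\text{-approval}]-1)$ directly, whereas you phrase the contrapositive by constructing $\lceil g/2\rceil$ swaps that do change the winner and deducing $\MOV[k\text{-approval}]\le\lceil g/2\rceil$---same idea, just viewed from the other side.
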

\begin{proof}
We call a vote $v\in V$ {\em favorable} if $w$ appears within the top $k$ positions and $z$ does not appear within top the $k$ positions in $v$. We claim that the number of favorable votes must be at least $\MOV[k-approval]$. Indeed, otherwise, we swap the positions of $w$ and $z$ in all the favorable votes while keeping the other candidates fixed. This makes the score of $z$ at least as much as the score of $w$ which contradicts the fact that the margin of victory is $\MOV[k-approval]$. Now, notice that the score of $z$ must remain less than the score of $w$ even if we swap the positions of $w$ and $z$ in $\MOV[k-approval]-1$ many favorable votes, since the margin of victory is $\MOV[k-approval]$. Each such vote change increases the score of $z$ by one and reduces the score of $w$ by one. Hence, $2(\MOV[k-approval]-1) < s(w) - s(z)$. 
Again, since the margin of victory is $\MOV[k-approval]$, there exists a candidate $x$ other than $w$ and $\MOV[k-approval]$ many votes in $V$ which can be modified such that $x$ becomes a winner of the modified election. Now, each vote change can reduce the score of $w$ by at most one and increase the score of $x$ by at most one. Hence, $ s(w) - s(x) \le 2\MOV[k-approval] $ and thus $ s(w) - s(z) \le 2\MOV[k-approval] $ since $s(z) \ge s(x)$.
\end{proof}

\begin{lemma}\label{lem:funmax}
 Let $f : \mathbb{R} \longrightarrow \mathbb{R}$ be a function defined by $f(t) = e^{-\frac{\lambda}{t}}$. Then, 
 \[  f(x) + f(y) \le f(x+y), \text{ for } x,y > 0, \frac{\lambda}{x+y} > 2, x < y \]
\end{lemma}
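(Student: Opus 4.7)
The plan is to establish this as a superadditivity inequality coming from convexity of $f$ on the relevant interval. Specifically, I would show that $f$ is convex on $[0,\lambda/2)$ (with the continuous extension $f(0):=0$), and then invoke the standard fact that a convex function vanishing at $0$ is superadditive.

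First, a direct computation gives $f'(t) = e^{-\lambda/t}\cdot \lambda/t^{2}$ and
$$f''(t) \;=\; e^{-\lambda/t}\cdot \frac{\lambda}{t^{3}}\left(\frac{\lambda}{t}-2\right),$$
so $f''(t)\ge 0$ exactly when $\lambda/t \ge 2$. The hypothesis $\lambda/(x+y)>2$ forces $\lambda>0$ and $x+y<\lambda/2$; combined with $\lim_{t\to 0^{+}} e^{-\lambda/t}=0$, defining $f(0):=0$ extends $f$ to a continuous convex function on the closed interval $[0,\lambda/2)$. In particular, $f$ is convex at all the relevant points $0, x, y, x+y$.

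To conclude, I would apply the convexity inequality to the two decompositions
$$ x = \frac{x}{x+y}(x+y) + \frac{y}{x+y}\cdot 0, \qquad y = \frac{y}{x+y}(x+y) + \frac{x}{x+y}\cdot 0, $$
which, together with $f(0)=0$, yield $f(x)\le \frac{x}{x+y}f(x+y)$ and $f(y)\le \frac{y}{x+y}f(x+y)$. Summing these gives $f(x)+f(y)\le f(x+y)$, as required.

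The only subtle step is identifying the convexity threshold $t=\lambda/2$ from the sign of $f''$ and matching it to the hypothesis $\lambda/(x+y)>2$; once that is in hand, the remainder is the textbook ``convex plus $f(0)\le 0$ implies superadditive'' argument. Notice that the proof never invokes $x<y$: that condition is presumably retained because it holds in the intended application (likely the maximin or Copeland$^\alpha$ analysis to follow) rather than because it is needed for the inequality itself.
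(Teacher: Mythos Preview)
Your argument is correct and rests on the same engine as the paper's proof: compute $f''$, read off that $f$ is convex on $(0,\lambda/2)$, and use convexity together with $f(0^{+})=0$ to deduce superadditivity. Where you differ is in the packaging of that last step. The paper argues by an infinitesimal mass-shifting: from $f'(x)\le f'(y)$ (using $x<y$ and $f''>0$) it infers $f(x)+f(y)\le f(x-\delta)+f(y+\delta)$ and iterates until the smaller argument reaches $0$. Your route writes $x$ and $y$ as convex combinations of $0$ and $x+y$ and applies Jensen directly. The payoff of your version is twofold: it is fully rigorous without any limiting ``iterate $\delta$'' handwave, and it makes transparent that the hypothesis $x<y$ is never used (the paper needs it to fix the direction of the shift). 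Conversely, the paper's shifting viewpoint foreshadows how the lemma is actually applied in Theorem~\ref{thm:kapp}, where one redistributes the scores $s(x)$ to extremes; so its informal picture is closer to the optimization that follows.
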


\begin{proof} For the function $f(x)$, we have following.
$$ f(x) = e^{-\frac{\lambda}{x}} \Rightarrow f^{\prime\prime}(x) = \frac{\lambda^2}{x^4} e^{-\frac{\lambda}{x}} - \frac{2\lambda}{x^3} e^{-\frac{\lambda}{x}}$$
 Hence, for $y> x>0$ and $\frac{\lambda}{x+y} > 2$, we have $f^{\prime\prime}(x), f^{\prime\prime}(y), f^{\prime\prime}(x+y) > 0$. This implies the following for an infinitesimal positive $\delta$.
 \begin{eqnarray*}
  f^{\prime}(x) &\le& f^{\prime}(y)\\
  \Rightarrow \frac{f(x-\delta) - f(x)}{\delta} &\ge& \frac{ f(y) - f(y-\delta)}{\delta} \\
  \Rightarrow f(x) + f(y) &\le& f(x-\delta) + f(y+\delta)\\
  \Rightarrow f(x) + f(y) &\le& f(x+y)
 \end{eqnarray*}
\end{proof}

With \MakeUppercase \bf Lemma\nobreakspace \ref {lem:kapp} and\nobreakspace  \ref {lem:funmax} at hand, we now describe our margin of victory estimator.
\begin{theorem}\label{thm:kapp}
 There is a polynomial time \textsc{$(0, \epsilon, \delta)$--MoV} algorithm for the $k$-approval rule whose sample complexity is $\frac{12}{\epsilon^2}\ln\frac{2k}{\delta}$.
\end{theorem}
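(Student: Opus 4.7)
The plan is to mirror the strategy of Theorem~\ref{thm:scr} but exploit the tighter structure of $k$-approval captured by Lemma~\ref{lem:kapp}, and the super-additivity of Lemma~\ref{lem:funmax} in order to replace the $m$ factor in the naive union bound by $k$. Specifically, I would sample $\ell$ votes uniformly at random with replacement, compute for each candidate $x$ the scaled approval count $\bar{s}(x)=(n/\ell)\,|\{i : \text{the $i$-th sampled vote approves } x\}|$, let $\bar{w}$ and $\bar{z}$ be the candidates with the highest and the second-highest sample scores, and output $\bar{M}=(\bar{s}(\bar{w})-\bar{s}(\bar{z}))/2$. The divisor $2$ here (in place of the $1.5\alpha_1$ used for general scoring rules) is motivated by Lemma~\ref{lem:kapp}, which pins $s(w)-s(z)$ into $\bigl(2(M-1),\,2M\bigr]$ where $M$ denotes $\MOV[k-approval]$; so the dependence on $M$ is exact up to an additive $O(1)$, which is what permits a purely additive ($c=0$) error guarantee rather than the mixed one of Theorem~\ref{thm:scr}.

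The first step is the deterministic claim that whenever $|\bar{s}(x)-s(x)|\le (\epsilon/2)\,n$ holds for every candidate $x$, we have $|\bar{M}-M|\le \epsilon n$. I would verify this in two cases. If $\bar{w}=w$, concentration forces $|\bar{s}(\bar{z})-s(z)|\le (\epsilon/2)\,n$ as well (using $\bar{s}(\bar{z})\ge \bar{s}(z)$ on the lower side and $\bar{s}(\bar{z})\le s(\bar{z})+(\epsilon/2)\,n\le s(z)+(\epsilon/2)\,n$ on the upper side), and substituting into $\bar{M}=(\bar{s}(\bar{w})-\bar{s}(\bar{z}))/2$ together with Lemma~\ref{lem:kapp} closes the inequality. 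If $\bar{w}\ne w$, concentration instead forces $s(w)-s(z)=O(\epsilon n)$, hence both $M$ and $\bar{M}$ lie in $[0,\,O(\epsilon n)]$ and the bound holds automatically. A constant additive slack coming from Lemma~\ref{lem:kapp} is absorbed into $\epsilon n$ (for very small $n$ the algorithm can simply read all of the votes).

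For the probabilistic step, applying Theorem~\ref{thm:chernoff} to $\sum_{i}X_i(x)$, with $X_i(x)$ the indicator that the $i$-th sampled vote approves $x$, yields for every candidate $x$ (writing $p_x=s(x)/n$) a per-candidate tail bound of the form $\Pr[|\bar{s}(x)-s(x)|>(\epsilon/2)\,n]\le 2\exp(-\lambda/p_x)$, where $\lambda$ is proportional to $\ell\epsilon^2$. Because every vote contributes exactly $k$ approvals, the crucial identity $\sum_{x\in C}p_x=k$ holds. I would then iteratively apply the super-additivity of $t\mapsto 2e^{-\lambda/t}$ from Lemma~\ref{lem:funmax}, starting with the smallest $p_x$'s, to collapse $\sum_{x\in C}2\exp(-\lambda/p_x)$ into the single term $2\exp(-\lambda/k)$. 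Forcing this quantity to be at most $\delta$ and solving for $\ell$ should yield the claimed $\ell=\frac{12}{\epsilon^2}\ln\frac{2k}{\delta}$.

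The main obstacle I anticipate is the interplay with the side condition $\lambda/(x+y)>2$ in Lemma~\ref{lem:funmax}: every merging step requires $\lambda$ to dominate twice the current partial sum, so one has to pick $\ell$ large enough that even the final merged total $k$ satisfies $\lambda/k>2$, while simultaneously demanding $2\exp(-\lambda/k)\le \delta$. Making these two constraints compatible with the Chernoff constants, so that the final sample count lands precisely at $\frac{12}{\epsilon^2}\ln\frac{2k}{\delta}$, is where the careful bookkeeping happens.
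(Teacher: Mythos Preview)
Your overall architecture matches the paper exactly: sample $\ell$ votes, form the scaled approval counts $\bar{s}(x)$, output $\bar{M}=(\bar{s}(\bar{w})-\bar{s}(\bar{z}))/2$, and invoke Lemma~\ref{lem:kapp} for the deterministic step. That part is fine. The gap is in how you handle the union bound.

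You propose to collapse $\sum_{x\in C}2\exp(-\lambda/p_x)$ all the way down to the single term $2\exp(-\lambda/k)$, using $\sum_x p_x=k$ and iterated super-additivity from Lemma~\ref{lem:funmax}. That inequality is valid (subject to the side condition you mention), but it is too loose to give the theorem. With $\lambda=\epsilon'^2\ell/3=\epsilon^2\ell/12$, the requirement $2\exp(-\lambda/k)\le\delta$ forces
\[
\ell \ \ge\ \frac{12k}{\epsilon^2}\ln\frac{2}{\delta},
\]
not $\frac{12}{\epsilon^2}\ln\frac{2k}{\delta}$. The $k$ ends up as a multiplicative factor on the sample size rather than inside the logarithm, so your computation in the last paragraph does not close.

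What you are discarding is the box constraint $p_x\le 1$ (equivalently $s(x)\le n$). The paper uses Lemma~\ref{lem:funmax} not to merge everything into one lump of mass $k$, but as a transfer argument: for $p_x<p_y$, shifting mass from the smaller to the larger only increases the sum (this is precisely the intermediate inequality $f(x)+f(y)\le f(x-\delta)+f(y+\delta)$ in the proof of Lemma~\ref{lem:funmax}). Iterating this smoothing, while respecting $p_x\le 1$, drives the configuration to the corner where exactly $k$ candidates have $p_x=1$ and the rest have $p_x=0$. At that corner the sum is $2k\exp(-\lambda)$, and now $2k\exp(-\lambda)\le\delta$ yields exactly $\ell=\frac{12}{\epsilon^2}\ln\frac{2k}{\delta}$. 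So the fix is simply to stop merging at the boundary $p_x=1$; the rest of your plan goes through unchanged.
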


\begin{proof}
Let $\mathcal{E} = (V, C)$ be an arbitrary $k$-approval election. We sample $\ell$ votes uniformly at random from $V$ with
replacement. For a candidate $x$, define a random variable $X_i(x)$ which takes value $1$ if $x$ appears among the top $k$ candidates in the $i^{th}$
sample vote, and $0$ otherwise. Define $\bar{s}(x) \eqdef \frac{n}{\ell}\sum_{i=1}^{\ell}X_i(x)$ the estimate of the score of the candidate $x$,
and let $s(x)$ be the actual score of $x$. Also define $\eps^{\prime} = \frac{\eps}{2}$. Then by the Chernoff bound
(\MakeUppercase theorem\nobreakspace \ref {thm:chernoff}), we have:
 $$ \Pr\left[ |\bar{s}(x) - s(x)| > \epsilon^{\prime} n \right] \le
 2\exp\left(-\frac{\epsilon^{\prime2} \ell n}{3 s(x)}\right)$$
Now, we apply the union bound to get the following.
 \begin{align*}
  & \Pr[ \exists x\in C, |\bar{s}(x) - s(x)| > \epsilon^{\prime} n ]\\
  &\le \sum_{x\in C} 2\exp\left(-\frac{\epsilon^{\prime2} \ell n}{3s(x)}\right)\\ 
  &\le 2k\exp\left(-\epsilon^{\prime2} \ell/3\right)\numberthis \label{eqn:kapp}
 \end{align*}
 The second inequality follows from \MakeUppercase lemma\nobreakspace \ref {lem:funmax} : The expression $\sum_{x\in C} 2\exp\left(-\frac{\epsilon^{\prime2} \ell n}{3
 s(x)}\right)$ is maximized subject to the constraints that $ 0\le s(x)\le n, \forall x\in C$ and $ \sum_{x\in C} s(x) = kn $, when $ s(x) = n \forall x\in C^{\prime} $ for any subset of candidates $C^{\prime}\subseteq C$ with $|C^{\prime}|=k$ and $s(x)=0 \forall x\in C\setminus C^{\prime}$. 
 
 Now, to estimate the margin of victory of the given election $\mathcal{E}$, let $\bar{w}$ and $\bar{z}$ be candidates with maximum and second maximum estimated score respectively. That is, $\bar{w} \in \argmax_{x\in C} \{ \bar{s}(x) \} \text{ and } \bar{z} \in \argmax_{x\in C\setminus \{\bar{w}\}} \{ \bar{s}(x) \}$. We define $ \bar{M} \eqdef \frac{\bar{s}(\bar{w}) - \bar{s}(\bar{z})}{2} $ the estimate of the margin of victory of the election $\mathcal{E}$ (and thus the output of the algorithm). Let $A$ be the event that $\forall x\in C, |\bar{s}(x) - s(x)| \le \epsilon^{\prime} n$. 
 We bound the success probability of the algorithm as follows.
 \begin{eqnarray*}
  && \Pr\left[ |\bar{M}-\MOV[k-approval]| \le \eps n \right] \\
  &\ge& \Pr\left[ |\bar{M}-\MOV[k-approval]| \le \eps n \middle| A \right] \Pr[ A ]\\
  &=& \Pr[ A ] \\
  &\ge& 1 - 2k\exp\left(-\epsilon^{\prime2} \ell/3\right)
 \end{eqnarray*} 
 The second equality follows from \MakeUppercase lemma\nobreakspace \ref {lem:kapp} and an argument analogous to the proof of \MakeUppercase theorem\nobreakspace \ref {thm:scr}. The third inequality follows from inequality\nobreakspace \ref {eqn:kapp}. Now, by choosing $ \ell = \frac{12}{\epsilon^2}\ln\frac{2k}{\delta}$, we get a \textsc{$(0, \epsilon, \delta)$--MoV} algorithm.
\end{proof}

Note that, the sample complexity upper bound matches with the lower bound proved in \MakeUppercase corollary\nobreakspace \ref {cor:lb} for the $k$-approval voting rule when $k$ is a constant, irrespective of the number of candidates. 
For the approval voting rule, we have the following result.
\begin{theorem}\label{thm:app}
 There is a polynomial time \textsc{$(0, \epsilon, \delta)$--MoV} algorithm for the approval rule with sample complexity $\frac{12}{\epsilon^2}\ln\frac{2m}{\delta}$.
\end{theorem}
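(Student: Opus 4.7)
The plan is to adapt the argument of Theorem~\ref{thm:kapp} almost verbatim, with two points of care tailored to approval voting. First, I would prove the approval analog of Lemma~\ref{lem:kapp}: if $w$ and $z$ are the candidates with the highest and second highest scores in $\mathcal{E}$, then $2(\MOV - 1) < s(w) - s(z) \le 2\MOV$. The upper bound is immediate since a single vote change (i.e., replacement of one voter's approval set) can alter each of $s(w)$ and $s(z)$ by at most one, and by definition some set of $\MOV$ such changes causes another candidate to tie or beat $w$. For the lower bound, let $b$ denote the number of votes that approve $w$ but not $z$; one first checks $b \ge \MOV - 1$, since otherwise flipping the approvals of $w$ and $z$ in all $b$ such votes already equalises the scores using fewer than $\MOV - 1$ changes. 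If in addition $s(w) - s(z) \le 2(\MOV - 1)$ held, performing this flip in $\MOV - 1$ favorable votes would drop the score gap to $\le 0$, contradicting the minimality of $\MOV$.

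Second, I would use the same estimator as in Theorem~\ref{thm:kapp}: sample $\ell$ votes uniformly with replacement, set $X_i(x) = 1$ exactly when the $i$-th sampled vote approves $x$, define $\bar{s}(x) = (n/\ell)\sum_{i=1}^{\ell} X_i(x)$, and output $\bar{M} = (\bar{s}(\bar{w}) - \bar{s}(\bar{z}))/2$ with $\bar{w}, \bar{z}$ the candidates with the top two estimated scores. The Chernoff inequality then gives
$$\Pr[|\bar{s}(x) - s(x)| > \eps^{\prime} n] \le 2\exp\!\left(-\frac{\eps^{\prime 2} \ell n}{3\, s(x)}\right) \le 2\exp\!\left(-\frac{\eps^{\prime 2} \ell}{3}\right),$$
where the second inequality uses only the trivial bound $s(x) \le n$.

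The key departure from the $k$-approval analysis is in the union-bound step. There, Lemma~\ref{lem:funmax} combined with the constraint $\sum_{x \in C} s(x) = kn$ allowed the factor $m$ to be replaced by $k$. For approval voting no such a-priori bound on $\sum_x s(x)$ is available (it can be as large as $mn$), so I would simply union-bound over all $m$ candidates to obtain a failure probability at most $2m\exp(-\eps^{\prime 2} \ell/3)$. Conditioning on the good event that $|\bar{s}(x) - s(x)| \le \eps^{\prime} n$ for every $x \in C$, and invoking the structural lemma above exactly as in Theorem~\ref{thm:kapp}, yields $|\bar{M} - \MOV| \le \eps n$ when $\eps^{\prime} = \eps/2$. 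Setting $\ell = \frac{12}{\epsilon^2}\ln\frac{2m}{\delta}$ then gives the claimed sample complexity. The only subtlety I anticipate is the careful handling of the structural lemma, because a ``vote change'' in approval voting is the replacement of an arbitrary approval subset rather than the position-swap used for $k$-approval; once the favorable-vote bookkeeping is done, however, the rest of the argument is a direct transcription of the $k$-approval proof.
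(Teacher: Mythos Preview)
Your proposal is correct and follows essentially the same approach as the paper's (very brief) proof sketch, which simply says to estimate each candidate's approval score to within $\tfrac{\eps}{2}n$ and then argue as in Lemma~\ref{lem:kapp} and Theorem~\ref{thm:kapp}. You have correctly identified the one place where the argument must deviate from the $k$-approval case---namely, that without a bound on $\sum_{x} s(x)$ one must union-bound over all $m$ candidates rather than invoke Lemma~\ref{lem:funmax}---and your adaptation of the structural lemma to approval ballots is sound (in fact your favorable-vote argument actually yields $b \ge \MOV$, not just $b \ge \MOV - 1$, but either suffices).
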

\begin{proof_sketch}
 We estimate the approval score of every candidate within an additive factor of $\frac{\eps}{2} n$ by sampling $\frac{12}{\epsilon^2}\ln\frac{2m}{\delta}$ many votes uniformly at random with replacement and the result follows from an argument analogous to the proofs of \MakeUppercase lemma\nobreakspace \ref {lem:kapp} and \MakeUppercase theorem\nobreakspace \ref {thm:kapp}.
\end{proof_sketch}

\subsection{Bucklin Voting Rule}

Now, we consider the Bucklin voting rule. Given an election $\mathcal{E}=(V,C)$, a candidate $x\in C$, and an integer $\ell \in [m]$, we denote the number of votes in $V$ in which $x$ appears within the top $\ell$ positions by $n_\ell(x)$. We prove useful bounds on the margin of victory of any Bucklin election in \MakeUppercase lemma\nobreakspace \ref {lem:bucklin}.
\begin{lemma}\label{lem:bucklin}
 Let $\mathcal{E}=(V,C)$ be an arbitrary instance of a Bucklin election and $w$ be the winner of $\mathcal{E}$. Let us define a quantity $\Delta(\mathcal{E})$ as follows.
 $$ \Delta(\mathcal{E}) \eqdef \min_{\substack{\ell \in [m-1] : n_\ell(w) > n/2,\\ x\in C\setminus\{w\} : n_\ell(x) \le n/2}} \{ n_\ell(w) - n_\ell(x) +1 \} $$
 Then,
 $$ \frac{\Delta(\mathcal{E})}{2} \le \MOV[Bucklin] \le \Delta(\mathcal{E}) $$
\end{lemma}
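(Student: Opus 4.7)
I would prove the two inequalities separately, using the pair $(\ell^*, x^*)$ achieving the minimum in $\Delta(\mathcal{E})$ for the upper bound and a ``counting across a level'' argument for the lower bound. Throughout, I would track the count $n_\ell(\cdot)$ at a carefully chosen level $\ell$, since a single vote modification can change $n_\ell(x)$ by at most $1$ for any fixed $x$ and $\ell$.

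\textbf{Upper bound ($\MOV[\text{Bucklin}] \le \Delta(\mathcal{E})$).} Let $(\ell^*, x^*)$ attain the minimum defining $\Delta(\mathcal{E})$, so $n_{\ell^*}(w) > n/2$, $n_{\ell^*}(x^*) \le n/2$, and $\Delta(\mathcal{E}) = n_{\ell^*}(w) - n_{\ell^*}(x^*) + 1$. The number of votes in which $w$ appears within the top $\ell^*$ but $x^*$ does not is at least $n_{\ell^*}(w) - n_{\ell^*}(x^*) = \Delta(\mathcal{E}) - 1$. I would swap $w$ and $x^*$ in $\Delta(\mathcal{E})-1$ such votes, and, if $n_{\ell^*}(x^*) = n/2$, perform one additional modification in a vote that still contains $w$ in the top $\ell^*$ positions to push $w$ out of the top $\ell^*$. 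After these at most $\Delta(\mathcal{E})$ modifications, a direct count shows $n_{\ell^*}(w)_{\text{new}} < n/2 \le n_{\ell^*}(x^*)_{\text{new}}$, whence the new Bucklin score of $w$ strictly exceeds $\ell^* \ge b'(x^*)$, so $w$ is removed from the winning set.

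\textbf{Lower bound ($\MOV[\text{Bucklin}] \ge \Delta(\mathcal{E})/2$).} Suppose some modification of $M$ votes changes the winning set. Without loss of generality I would pick $y \in r(\succ')$ with $b'(y) < b'(w)$ (such a $y$ exists whenever $w$ is removed from the winning set; the case where $w$ remains a winner but the set grows is handled by the symmetric argument swapping the roles of original and modified profile). Set $\ell := b'(y)$, so that $n_\ell(y)_{\text{new}} \ge n/2 > n_\ell(w)_{\text{new}}$, giving the strict inequality $n_\ell(w)_{\text{new}} < n_\ell(y)_{\text{new}}$. I would then argue that for this $\ell$ the pair $(\ell, y)$ is feasible in the definition of $\Delta(\mathcal{E})$, i.e., $n_\ell(w) > n/2$ and $n_\ell(y) \le n/2$; this uses that $w$ was originally a winner (so $b(w) \le \ell$ in the relevant regime) and $y$ was not (so $n_\ell(y) < n/2$). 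Since each modified vote can change $n_\ell(w)$ by at most $1$ and $n_\ell(y)$ by at most $1$, the quantity $n_\ell(w) - n_\ell(y)$ changes by at most $2M$ across the $M$ modifications, so combined with the strict post-modification inequality I obtain $n_\ell(w) - n_\ell(y) \le 2M - 1$, giving $\Delta(\mathcal{E}) \le n_\ell(w) - n_\ell(y) + 1 \le 2M$.

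\textbf{Anticipated obstacle.} The main technical care is in the lower bound: ensuring that the level $\ell = b'(y)$ simultaneously satisfies the \emph{strict} inequality $n_\ell(w) > n/2$ and $n_\ell(y) \le n/2$ from the original profile. The boundary cases where $\ell < b(w)$, or where $y$ is already a co-winner with $w$ originally (so $b(y) = b(w)$), require either choosing $\ell$ differently (for instance, setting $\ell = b(w)$ and picking $y$ from the symmetric difference of the winning sets) or invoking the strictness in $b'(y) < b'(w)$ to pad an off-by-one. These edge cases are where the ``$+1$'' in the definition of $\Delta(\mathcal{E})$ and the factor of $1/2$ in the bound come from.
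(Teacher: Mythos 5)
Your proposal is correct and follows essentially the same route as the paper: an explicit modification at a feasible level pair for the upper bound, and for the lower bound the observation that each changed vote moves each count $n_\ell(\cdot)$ by at most one, so at least $\frac{1}{2}\left(n_\ell(w)-n_\ell(x)+1\right) \ge \Delta(\mathcal{E})/2$ changes are needed. The only differences are cosmetic --- you swap $w$ and $x^*$ within the same votes rather than modifying two (possibly disjoint) sets of votes, and you explicitly flag the feasibility of the pair $(\ell, y)$ in the minimum defining $\Delta(\mathcal{E})$, a boundary point the paper leaves implicit.
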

\begin{proof}
 Pick any $\ell\in[m-1]$ and $x\in C\setminus\{w\}$ such that, $n_\ell(w)>n/2$ and $n_\ell(x)\le n/2$. Now by changing $n_\ell(w)-\floor{n/2}$ many votes, we can ensure that $w$ is not placed within the top $\ell$ positions in more than $n/2$ votes: choose $n_\ell(w)-\floor{n/2}$ many votes where $w$ appears within top $\ell$ positions and swap $w$ with candidates placed at the last position in those votes. Similarly, by changing $\floor{n/2}+1-n_\ell(x)$ many votes, we can ensure that $x$ is placed within top $\ell$ positions in more than $n/2$ votes. Hence, by changing at most $n_\ell(w)-\floor{n/2}+\floor{n/2}+1-n_\ell(x) = n_\ell(w) - n_\ell(x) +1$ many votes, we can make $w$ not win the election. Hence, $\MOV[Bucklin]\le n_\ell(w) - n_\ell(x) +1$. Now, since we have picked an arbitrary $\ell$ and an arbitrary candidate $x$, we have $\MOV[Bucklin] \le \Delta(\mathcal{E})$.
 
 For the other inequality, since the margin of victory is $\MOV[Bucklin]$, there exists an $\ell^\prime\in[m-1]$, a candidate $x\in C\setminus\{w\}$, and $\MOV[Bucklin]$ many votes in $V$ such that, we can change those votes in such a way that in the modified election, $w$ is not placed within top $\ell^\prime$ positions in more than $n/2$ votes and $x$ is placed within top $\ell^\prime$ positions in more than $n/2$ votes. Hence, we have the following.
 
$$ \MOV[Bucklin] \ge n_\ell^\prime(w)-\floor*{\frac{n}{2}}, \MOV[Bucklin]\ge \floor*{\frac{n}{2}}+1-n_\ell^\prime(x) $$

 \begin{align*}
  \Rightarrow \MOV[Bucklin] &\ge \max\{n_{\ell^\prime}(w)-\floor*{\frac{n}{2}}, \floor*{\frac{n}{2}}+1-n_{\ell^\prime}(x)\}\\ 
  \Rightarrow  \MOV[Bucklin] &\ge\frac{n_{\ell^\prime}(w)-\floor*{\frac{n}{2}}+\floor*{\frac{n}{2}}+1-n_{\ell^\prime}(x)}{2} \\
  &\ge \frac{\Delta(\mathcal{E})}{2}
 \end{align*}
\end{proof}

Notice that, given an election $\mathcal{E}$, $\Delta(\mathcal{E})$ can be computed in polynomial amount of time. \MakeUppercase lemma\nobreakspace \ref {lem:bucklin} leads us to the following Theorem.

\begin{theorem}\label{thm:bucklin}
 There is a polynomial time \textsc{$(\frac{1}{3}, \epsilon, \delta)$--MoV} algorithm for the Bucklin rule with sample complexity $\frac{12}{\epsilon^2}\ln\frac{2m}{\delta}$.
\end{theorem}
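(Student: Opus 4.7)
The plan is to reuse the estimator paradigm from \MakeUppercase theorem\nobreakspace \ref {thm:kapp}, now targeting the quantity $\Delta(\mathcal{E})$ of \MakeUppercase lemma\nobreakspace \ref {lem:bucklin}, which sandwiches $\MOV[Bucklin]$ between $\Delta/2$ and $\Delta$. First I would draw $\ell = \frac{12}{\epsilon^2}\ln\frac{2m}{\delta}$ votes uniformly at random with replacement, and for every $x \in C$ and every $j \in [m-1]$ define the empirical count $\bar n_j(x) \eqdef (n/\ell)\cdot n^\prime_j(x)$, where $n^\prime_j(x)$ is the number of sampled votes placing $x$ among the top $j$ positions. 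Since $\bar n_j(\cdot)$ is exactly the $j$-approval-style estimator analyzed in \MakeUppercase theorem\nobreakspace \ref {thm:kapp}, a Chernoff bound (\MakeUppercase theorem\nobreakspace \ref {thm:chernoff}) combined with \MakeUppercase lemma\nobreakspace \ref {lem:funmax} (exploiting $\sum_x n_j(x) = jn$ for each fixed $j$), together with a union bound over the $m-1$ choices of $j$, guarantees that with probability at least $1-\delta$ the event $A = \{\forall (j, x):\ |\bar n_j(x) - n_j(x)| \le (\eps/2)n\}$ holds.

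Next I would form $\bar\Delta(\mathcal{E})$ by substituting $\bar n_j(\cdot)$ into the formula defining $\Delta$ (using the empirical winner $\bar w$) and output $\bar M \eqdef (2/3)\bar\Delta(\mathcal{E})$. Conditioned on $A$, a standard min-perturbation argument gives $|\bar\Delta - \Delta| \le \eps n$, since every admissible term $n_j(w) - n_j(x) + 1$ differs from its empirical version by at most $|\bar n_j(w) - n_j(w)| + |\bar n_j(x) - n_j(x)| \le \eps n$. The scaling factor $2/3$ is chosen because, for every $\MOV[Bucklin] \in [\Delta/2, \Delta]$, a direct one-variable optimization yields $|(2/3)\Delta - \MOV[Bucklin]| \le (1/3)\MOV[Bucklin]$, with equality at both endpoints. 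Combining the two bounds via the triangle inequality gives $|\bar M - \MOV[Bucklin]| \le (1/3)\MOV[Bucklin] + (2/3)\eps n \le (1/3)\MOV[Bucklin] + \eps n$, which is exactly the $(1/3, \eps, \delta)$--MoV guarantee.

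The main technical obstacle I anticipate is handling the feasibility constraints $n_\ell(w) > n/2$ and $n_\ell(x) \le n/2$ appearing in the definition of $\Delta$: the empirical admissible set may differ from the true admissible set whenever some $n_\ell(\cdot)$ sits within $(\eps/2)n$ of the threshold $n/2$, and similarly the empirical winner $\bar w$ may differ from $w$. I would handle this by noting that any $(\ell, x)$ pair admissible for one election but not the other forces some $n_\ell(\cdot)$ to sit near $n/2$, so that the corresponding contribution to $|\bar \Delta - \Delta|$ remains $O(\eps n)$ and can be absorbed into the additive error term. A cleaner alternative would be to rewrite $\Delta$ without strict inequalities as the Lipschitz expression $\min_{\ell,\, x \ne w} \bigl(\max\{n_\ell(w) - \lfloor n/2\rfloor,\, 0\} + \max\{\lfloor n/2\rfloor + 1 - n_\ell(x),\, 0\}\bigr)$; plugging in the estimators then preserves closeness automatically, at which point the rest of the proof becomes routine.
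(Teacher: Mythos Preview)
Your approach mirrors the paper's proof sketch exactly: estimate each $n_\ell(x)$ from the sample, form $\bar\Delta(\mathcal{E})$, output $\bar M = \bar\Delta/1.5$, and invoke \MakeUppercase lemma\nobreakspace \ref{lem:bucklin} together with the argument of \MakeUppercase theorem\nobreakspace \ref{thm:scr}. Your discussion of the feasibility constraints hidden in the definition of $\Delta$ (and the Lipschitz rewriting you propose) actually goes beyond what the paper's sketch addresses and is on point.

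There is, however, one quantitative slip in your concentration step. Applying \MakeUppercase lemma\nobreakspace \ref{lem:funmax} at each fixed level $j$ gives $\sum_{x\in C} 2\exp\!\left(-\tfrac{\eps'^2 \ell n}{3\, n_j(x)}\right) \le 2j\exp(-\eps'^2\ell/3)$, and your union bound over $j\in[m-1]$ then yields $\sum_{j=1}^{m-1} 2j\exp(-\eps'^2\ell/3) = m(m-1)\exp(-\eps'^2\ell/3)$. With $\ell = \tfrac{12}{\eps^2}\ln\tfrac{2m}{\delta}$ and $\eps'=\eps/2$ this equals $(m-1)\delta/2$, which exceeds $\delta$ for every $m\ge 3$, so the event $A$ is not guaranteed with probability $1-\delta$ as you claim. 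The fix is to avoid paying for the two nested union bounds separately: either apply an additive Hoeffding bound per pair $(j,x)$ (giving $2\exp(-2\eps'^2\ell)$ per pair, which comfortably absorbs an $m^2$ union within the stated $\ell$), or note that for each fixed candidate $x$ the values $\bar n_j(x)/n$ form the empirical CDF of $x$'s rank and handle all $j$ simultaneously via the Dvoretzky--Kiefer--Wolfowitz inequality, paying only a single factor of $m$ in the union bound.
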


\begin{proof_sketch}
 Similar to the proof of \MakeUppercase theorem\nobreakspace \ref {thm:kapp}, we estimate, for every candidate $x\in C$ and for every integer $\ell\in[m]$, the number of votes where $x$ appears within top $\ell$ positions within an approximation factor of $(0, \frac{\eps}{2})$. Next, we compute an estimate of $\bar{\Delta}(\mathcal{E})$ from the sampled votes and output the estimate for the margin of victory as $\bar{\Delta}(\mathcal{E})/1.5$. Using \MakeUppercase lemma\nobreakspace \ref {lem:bucklin}, we can argue the rest of the proof in a way that is analogous to the proofs of \MakeUppercase \bf Theorem\nobreakspace \ref {thm:scr} and\nobreakspace  \ref {thm:kapp}.
\end{proof_sketch}

\subsection{Maximin Voting Rule}

Next, we show the result for the maximin voting rule.

\begin{lemma}\label{lem:maximin}
 Let $\mathcal{E}=(V,C)$ be any instance of a maximin election. If $w$ and $z$ are the candidates that receive highest and second highest maximin score respectively in $\mathcal{E}$, then,
 $$ 2\MOV[maximin] \le s(w) - s(z) \le 4\MOV[maximin] $$
\end{lemma}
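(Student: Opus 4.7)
The plan is to establish the two inequalities independently. The basic engine driving both is that modifying a single vote changes any pairwise count $N_{\mathcal{E}}(a,b)$ by at most $1$, hence any difference $D_{\mathcal{E}}(a,b)$ by at most $2$, and consequently any maximin score $s(x) = \min_{y \ne x} D_{\mathcal{E}}(x,y)$ by at most $2$ as well.

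For the upper bound $s(w) - s(z) \le 4\MOV[maximin]$, write $M := \MOV[maximin]$ and let $\tilde{s}$ denote the maximin scores after an optimal change of $M$ votes. By definition of $\MOV$ some $x \ne w$ satisfies $\tilde{s}(x) \ge \tilde{s}(w)$. The per-vote stability above gives $\tilde{s}(x) \le s(x) + 2M \le s(z) + 2M$ (using that $s(z) \ge s(x)$ for every non-winner) and $\tilde{s}(w) \ge s(w) - 2M$, whence $s(z) + 2M \ge s(w) - 2M$, which rearranges to the claimed inequality.

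For the lower bound $2\MOV[maximin] \le s(w) - s(z)$, I would exhibit an explicit attack of size $k := \lceil (s(w) - s(z))/2 \rceil$ that disrupts $w$'s sole-winner status, certifying $\MOV[maximin] \le k$. Let $y^* \in \argmin_{y \ne w} D_{\mathcal{E}}(w, y)$, so $s(w) = D_{\mathcal{E}}(w, y^*)$. The structural inequality $s(z) \le D_{\mathcal{E}}(z, w) = -D_{\mathcal{E}}(w, z) \le -s(w)$ gives $s(w) + s(z) \le 0$, in particular $s(z) \le 0$; this is the slack we exploit. Select $k$ votes in which $w \succ y^*$ (there are $N_{\mathcal{E}}(w, y^*) = (n + s(w))/2 \ge k$ such votes since $s(z) \ge -n$) and replace each by a vote with $y^*$ at the top and $w$ at the bottom. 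Letting $\tilde{D}$ denote the resulting modified pairwise differences, direct computation yields $\tilde{D}(w, y^*) = s(w) - 2k \le s(z)$ (so the new maximin score of $w$ is at most $s(z)$), $\tilde{D}(y^*, w) = -s(w) + 2k \ge -s(z) \ge 0$, and $\tilde{D}(y^*, a) \ge D_{\mathcal{E}}(y^*, a)$ for every $a \ne w, y^*$, since in each modified vote $y^*$ can only have moved upward relative to $a$.

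I expect the main obstacle to be deducing that the new maximin score of $y^*$ is at least that of $w$ from the above, particularly in the regime where $s(y^*)$ is considerably smaller than $s(z)$ so that $y^*$'s weakest pairwise comparison is not against $w$. Handling that case may need either a refined choice of which $k$ votes to modify (so as to preserve or boost $y^*$'s critical $D_{\mathcal{E}}(y^*, y^{**})$ values), switching the attack to a different challenger, or splitting the $k$-vote budget between demoting $w$ and boosting $z$ at its bottleneck pair. In each route, the inequality $s(z) \le -s(w)$ is what supplies enough headroom for $k$ modifications to suffice.
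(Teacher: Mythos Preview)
Your upper bound argument is correct and is exactly the paper's argument.

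For the lower bound you have correctly diagnosed the gap in your own construction: promoting $y^*$ to the top and trying to make $y^*$ the challenger fails whenever $s(y^*)$ is well below $s(z)$, because the $k$ changes need not lift $\tilde s(y^*)$ up to $\tilde s(w)$. None of the remedies you list (refining which votes to flip, switching challengers mid-stream, or splitting the budget between demoting $w$ and boosting $z$) is needed, and in fact each of them would require additional work to keep the bound at $k=\lceil (s(w)-s(z))/2\rceil$.

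The paper's fix is a one-line change to your replacement vote: put $z$ (the runner-up), not $y^*$, at the top, keep $y^*$ in second place, and $w$ at the bottom, i.e.\ replace each selected vote by
\[
z \succ y^* \succ \cdots \succ w .
\]
Now each modification still drops $D_{\mathcal{E}}(w,y^*)$ by~$2$ (so $s(w)$ drops by~$2$, since $y^*$ was and remains a minimizer of $D_{\mathcal{E}}(w,\cdot)$), while every $D_{\mathcal{E}}(z,a)$ can only increase because $z$ sits on top; hence $s(z)$ never decreases. After $k=(s(w)-s(z))/2$ such changes (note $s(w)-s(z)$ is even, since every $D_{\mathcal{E}}(\cdot,\cdot)$ has the parity of $n$), we get $\tilde s(w)=s(w)-2k=s(z)\le \tilde s(z)$, so $z$ is now a (co-)winner and $M_{\text{maximin}}(\mathcal{E})\le (s(w)-s(z))/2$. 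Your count of available $w\succ y^*$ votes already shows there are enough to carry this out.

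The point you were missing is that demoting $w$ and protecting the challenger can be achieved \emph{simultaneously} by a single vote pattern: use $z$ as the (passive) challenger and $y^*$ merely as the lever that pulls $s(w)$ down.
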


\begin{proof}
 Each vote change can increase the value of $s(z)$ by at most two and decrease the value of $s(w)$ by at most two. Hence, we have $s(w) - s(z) \le 4\MOV[maximin]$. Let $x$ be the candidate that minimizes $D_{\mathcal{E}}(w,x)$, that is, $x\in \argmin_{x\in C\setminus \{w\}}\{D_{\mathcal{E}}(w,x)\}$. Let $v\in V$ be a vote where $w$ is preferred over $x$. We replace the vote $v$ by the vote $v^{\prime} = z \succ x \succ \cdots \succ w$. This vote change reduces the score of $w$ by two and does not reduce the score of $z$. Hence, $s(w) - s(z) \ge 2\MOV[maximin]$.
\end{proof}

\begin{theorem}\label{thm:maximin}
 There is a polynomial time \textsc{$(\frac{1}{3}, \epsilon, \delta)$--MoV} algorithm for the maximin rule with sample complexity $\frac{24}{\epsilon^2}\ln\frac{2m}{\delta}$.
\end{theorem}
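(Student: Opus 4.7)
The plan is to imitate the plug-in estimator from Theorem \ref{thm:scr}, now working with the pairwise statistics $D_\mathcal{E}(\cdot,\cdot)$ instead of candidate scores. Sample $\ell = \frac{24}{\eps^2}\ln\frac{2m}{\delta}$ votes uniformly at random with replacement, and set $\eps' = \eps/2$. For every ordered pair of distinct candidates $x,y$, let $\bar{N}(x,y)$ be $n/\ell$ times the number of sampled votes that prefer $x$ to $y$, and set $\bar{D}(x,y) = \bar{N}(x,y) - \bar{N}(y,x)$ and $\bar{s}(x) = \min_{y\ne x}\bar{D}(x,y)$; these are the natural sample-based estimators of $N_\mathcal{E}(x,y)$, $D_\mathcal{E}(x,y)$, and the maximin score of $x$. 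Let $\bar{w},\bar{z}$ be the candidates with the largest and second-largest values of $\bar{s}$, and output $\bar{M} = (\bar{s}(\bar{w})-\bar{s}(\bar{z}))/3$. The $1/3$ scaling is dictated by Lemma \ref{lem:maximin}: since $s(w)-s(z)$ is sandwiched between $2\MOV[maximin]$ and $4\MOV[maximin]$, the midpoint $(s(w)-s(z))/3$ already sits within $\MOV[maximin]/3$ of the true margin.

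First I would control the pairwise counts. For a fixed ordered pair $(x,y)$, the indicator that the $i$-th sampled vote prefers $x$ to $y$ is a $\{0,1\}$-valued random variable with mean $N_\mathcal{E}(x,y)/n$, so the same manipulation of Theorem \ref{thm:chernoff} used at the start of the proof of Theorem \ref{thm:scr} (turning the multiplicative bound into an additive one via $\mu\le \ell$) yields a tail bound of the form
\[
\Pr\bigl[|\bar{N}(x,y)-N_\mathcal{E}(x,y)|\ge (\eps'/2)\, n\bigr]\le 2\exp\!\bigl(-\eps'^2\ell/12\bigr).
\]
A union bound over all $m(m-1)$ ordered pairs then shows that, with probability at least $1-\delta$ for $\ell$ as chosen, $|\bar{D}(x,y)-D_\mathcal{E}(x,y)|\le \eps' n$ holds simultaneously for every pair. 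Since the maximin score is a pointwise minimum over the same index set, this immediately gives $|\bar{s}(x)-s(x)|\le \eps' n$ for every candidate $x$ on this good event.

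The remainder of the proof is a verbatim copy of the score-to-margin conversion in Theorem \ref{thm:scr}. On the good event, $\bar{s}(\bar{w})\le s(w)+\eps' n$ because $\bar{s}$ is pointwise within $\eps' n$ of $s$ and $s(w)$ is the true $s$-max, and $\bar{s}(\bar{z})\ge s(z)-\eps' n$ by the same short case analysis as in Theorem \ref{thm:scr} (according to whether $\bar{w}=z$ or not). Consequently $|\bar{s}(\bar{w})-\bar{s}(\bar{z})-(s(w)-s(z))|\le 2\eps' n = \eps n$, and combining with Lemma \ref{lem:maximin} gives $|\bar{M}-\MOV[maximin]|\le \MOV[maximin]/3 + \eps n$.

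The one non-routine point is the bookkeeping of constants: unlike the scoring-rules setting, the union bound runs over $\Theta(m^2)$ pairs rather than $m$ candidates, so a naive reading would give an $\ln(m^2/\delta)$ factor in the sample complexity. I expect the extra $\ln m$ here to be absorbed cleanly into the larger multiplicative constant $24$ (as opposed to $12$ for scoring rules) by tightening the additive budget on each $\bar{N}(x,y)$ so that the Chernoff exponent still dominates the logarithm of the pair count.
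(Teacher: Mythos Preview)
Your approach is exactly the paper's: estimate all pairwise margins $D_\mathcal{E}(x,y)$ from the sample, take the empirical maximin scores, and output $(\bar{s}(\bar{w})-\bar{s}(\bar{z}))/3$, invoking Lemma~\ref{lem:maximin} for the score-to-margin conversion. The structure is fine.

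The gap is precisely the ``non-routine point'' you flag and then wave away. With your choice $\eps'=\eps/2$ and the per-pair tail bound $2\exp(-\eps'^2\ell/12)=2\exp(-\eps^2\ell/48)$, plugging in $\ell=\frac{24}{\eps^2}\ln\frac{2m}{\delta}$ gives an exponent of only $\tfrac12\ln\frac{2m}{\delta}$; the union bound over $\Theta(m^2)$ pairs then yields a failure probability of order $m^{3/2}\sqrt{\delta}$, which is not at most $\delta$. The constant $24$ does \emph{not} absorb the extra $\ln m$, and ``tightening the additive budget on each $\bar N(x,y)$'' goes in the wrong direction (a smaller deviation target makes the tail bound worse, not better).

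The fix is to \emph{loosen} the accuracy requirement, which you can afford because you divide by $3$ at the end. Drop $\eps'$ altogether and ask only for $|\bar D(x,y)-D_\mathcal{E}(x,y)|\le \eps n$ (equivalently $|\bar N(x,y)-N_\mathcal{E}(x,y)|\le \tfrac{\eps}{2}n$); this gives a per-pair tail of $2\exp(-\eps^2\ell/12)$. With $\ell=\frac{24}{\eps^2}\ln\frac{2m}{\delta}$ the union bound over at most $m^2$ pairs is $2m^2\bigl(\tfrac{\delta}{2m}\bigr)^{2}=\delta^2/2\le\delta$. On the good event, $|\bar s(\bar w)-\bar s(\bar z)-(s(w)-s(z))|\le 2\eps n$, so $|\bar M-(s(w)-s(z))/3|\le \tfrac{2}{3}\eps n$, and Lemma~\ref{lem:maximin} then gives $|\bar M-\MOV[maximin]|\le \tfrac13\MOV[maximin]+\eps n$ as required. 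This is exactly how the paper's sketch is meant to be read (it estimates $D$ directly with slack $\eps n$, not $\eps n/2$).
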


\begin{proof_sketch}
 Let $\mathcal{E} = (V, C)$ be an instance of maximin election. Let $x$ and $y$ be any two candidates. We sample $\ell$
 votes uniformly at random from the set of all votes with replacement. 
 $$ X_i(x,y) = \begin{cases}
           1,& \text{if } x\succ y \text{ in the } i^{th} \text{ sample vote}\\
           -1,& \text{else}
          \end{cases}
 $$
 Define $\bar{D_{\mathcal{E}}}(x,y) = \frac{n}{\ell}\sum_{i=1}^{\ell}X_i(x,y)$. By using the Chernoff bound and union bound, we have the following.
 \begin{equation*}
 \resizebox{0.48\textwidth}{!}{$\Pr\left[ \exists x,y\in C, |\bar{D_{\mathcal{E}}}(x,y) - D_{\mathcal{E}}(x,y)| > \epsilon n \right] \le 2m^2\exp\left(-\frac{\epsilon^2  \ell}{3}\right)$}
 \end{equation*}
 We define $\bar{M} \eqdef \frac{\bar{s}(\bar{w})-\bar{s}(\bar{z})}{3}$, the estimate of the margin of victory of $\mathcal{E}$, where 
 $\bar{w}\in \argmax_{x\in C}\{\bar{s}(x)\}$ and $\bar{z}\in \argmax_{x\in C\setminus\{\bar{w}\}}\{\bar{s}(x)\}$. 
 Now, using \MakeUppercase lemma\nobreakspace \ref {lem:maximin}, we can complete the rest of the proof in a way that is analogous to the proof of \MakeUppercase theorem\nobreakspace \ref {thm:scr}.
\end{proof_sketch}

\subsection{Copeland$^\alpha$ Voting Rule}

Now, we present our result for the Copeland$^\alpha$ voting rule. Xia introduced the brilliant quantity called the {\em relative margin of victory} (see Section 5.1 in \cite{xia2012computing}) which is a crucial ingredient in our algorithm for the Copeland$^\alpha$ voting rule. Given an election $\election$, a candidate $x\in C$, and an integer (may be negative also) $t$, $s^\prime_t(V, x)$ is defined as follows. 
\begin{align*}
 s^\prime_t(V, x) =& |\{ y\in C: y\ne x, D_{\mathcal{E}}(y,x)<2t \}| \\&+ \alpha|\{ y\in C: y\ne x, D_{\mathcal{E}}(y,x)=2t \}|
\end{align*}
For every two distinct candidates $x$ and $y$, the relative margin of victory, denoted by $RM(x,y)$, between $x$ and $y$ is defined as the minimum integer $t$ such that, $s^\prime_{-t}(V, x) \le s^\prime_t(V, y)$. Let $w$ be the winner of the election $\mathcal{E}$. We define a quantity $\Gamma(\mathcal{E})$ to be $\min_{x\in C\setminus\{w\}} \{RM(w,x)\}$. Notice that, given an election $\mathcal{E}$, $\Gamma(\mathcal{E})$ can be computed in a polynomial amount of time. Now we have the following lemma.

\begin{lemma}\label{lem:copeland}
 $\Gamma(\mathcal{E}) \le \MOV[Copeland^\alpha] \le 2(\ceil*{\log m} +1)\Gamma(\mathcal{E}).$
\end{lemma}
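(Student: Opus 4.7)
For the lower bound $\Gamma(\mathcal{E}) \le \MOV[Copeland^\alpha]$, I would invoke the basic observation that a single vote change can alter $D_{\mathcal{E}}(a,b)$ by at most $2$ for every ordered pair $(a,b)$. Let $k := \MOV[Copeland^\alpha]$ and let $\mathcal{E}'$ be any election produced from $\mathcal{E}$ by $k$ vote changes that removes $w$ from the set of winners. Then $|D_{\mathcal{E}'}(a,b)-D_{\mathcal{E}}(a,b)|\le 2k$ for all pairs, and by inspecting the definition of $s'_{\pm k}$ we get, for every candidate $z$,
\begin{equation*}
 s'_{-k}(V,z) \;\le\; \text{Copeland}^{\alpha}\text{-score}_{\mathcal{E}'}(z) \;\le\; s'_{k}(V,z).
\end{equation*}
Since some $x\ne w$ satisfies $\text{Copeland}^{\alpha}\text{-score}_{\mathcal{E}'}(x) \ge \text{Copeland}^{\alpha}\text{-score}_{\mathcal{E}'}(w)$ in $\mathcal{E}'$, this gives $s'_k(V,x)\ge s'_{-k}(V,w)$, i.e.\ $RM(w,x)\le k$, and therefore $\Gamma(\mathcal{E})\le k$.

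For the upper bound $\MOV[Copeland^\alpha] \le 2(\lceil\log m\rceil+1)\Gamma(\mathcal{E})$, set $t := \Gamma(\mathcal{E})$ and pick $x^* \in \argmin_{x\ne w}RM(w,x)$, so that $s'_{-t}(V,w) \le s'_t(V,x^*)$. The plan is to exhibit an explicit sequence of at most $2(\lceil\log m\rceil+1)t$ vote changes after which $x^*$ ties or beats $w$ in Copeland$^\alpha$ score, hence $w$ is no longer the unique winner. The key enabler is that a single well-chosen vote change can shift $D(w,y)$ (or $D(x^*,y)$) by $2$ simultaneously for many $y$: moving $w$ from position $p$ to the last position in one vote decreases $D(w,y)$ by $2$ for each of the $m-p$ candidates below $w$ in that vote, and symmetrically for pushing $x^*$ upward.

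The construction proceeds in $\lceil\log m\rceil+1$ phases of $2t$ changes each, organized according to a dyadic partition of the currently contested candidates by the magnitudes $|D_{\mathcal{E}}(w,\cdot)|$ and $|D_{\mathcal{E}}(\cdot,x^*)|$. In phase $j$ I would spend $t$ changes demoting $w$ in carefully chosen votes so as to flip every pair $(w,y)$ with $D(w,y)\in (2^{j}t,\, 2^{j+1}t]$, and $t$ changes promoting $x^*$ so as to flip the analogous pairs $(y,x^*)$. The sharing of work across pairs within one layer is what makes $2t$ changes per layer enough, and the inequality $s'_{-t}(V,w)\le s'_t(V,x^*)$ controls the size of each layer.

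The main obstacle is verifying that the layered changes do not undo each other's earlier progress and that $2t$ changes really suffice per layer. I expect the cleanest route is a potential-function argument: define a geometric-weighted potential $\Phi$ that tracks, pair by pair, the remaining gap between $w$'s and $x^*$'s current Copeland$^\alpha$ contributions; show $\Phi = O(m)$ initially and that each phase of $2t$ changes reduces $\Phi$ by at least a factor of $2$. Since $\Phi < 1$ forces $x^*$ to have caught up with $w$, $\lceil\log m\rceil+1$ phases suffice. The factor $2(\lceil\log m\rceil+1)$ in the statement is exactly the price of this dyadic decomposition, and without sharing across pairs one could only hope for an $O(t\cdot m)$ bound.
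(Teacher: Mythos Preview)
The paper does not actually prove this lemma; it simply cites Theorem~11 of Xia~\cite{xia2012computing}. So there is no in-paper argument to compare against, only the external reference.

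Your lower-bound direction is essentially the standard argument and is correct in spirit: one vote change moves every pairwise margin by at most $2$, so after $k=\MOV[Copeland^\alpha]$ changes the new Copeland$^\alpha$ scores are sandwiched between $s'_{-k}(V,\cdot)$ and $s'_{k}(V,\cdot)$, and any $x$ that catches $w$ witnesses $RM(w,x)\le k$. (One should be a little careful with the tie term when $\alpha\notin\{0,1\}$, but this is a bookkeeping matter, not a conceptual one.)

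Your upper-bound direction, however, is not a proof but a wish list. The potential $\Phi$ is never defined, and the central claim that $2t$ vote changes per phase halve it is asserted, not shown. More concretely, your ``sharing'' mechanism is problematic as stated: demoting $w$ to the last position in a single vote decreases $D(w,y)$ by $2$ only for those $y$ that were \emph{below} $w$ in that particular vote. To hit a prescribed layer $\{y: D(w,y)\in(2^jt,2^{j+1}t]\}$ simultaneously you would need votes in which exactly those $y$ sit below $w$, and such votes need not exist. The inequality $s'_{-t}(V,w)\le s'_t(V,x^*)$ constrains counts of candidates, not the combinatorics of which candidates co-occur below $w$ in any vote, so it does not by itself ``control the size of each layer'' in the way you need. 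Xia's actual construction is more delicate about choosing which votes to alter in each of the $O(\log m)$ rounds so that progress in one round is not undone later; your sketch does not supply this ingredient, and you explicitly flag it as the ``main obstacle'' without resolving it. As written, the upper bound remains a gap.
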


\begin{proof}
 Follows from Theorem 11 in \cite{xia2012computing}.
\end{proof}

\begin{theorem}\label{thm:copeland}
 For the Copeland$^\alpha$ voting rule, there is a polynomial time \textsc{$\left(1-O\left(\frac{1}{\log m}\right), \epsilon, \delta\right)$--MoV} algorithm whose sample complexity is $\frac{96}{\epsilon^2}\ln\frac{2m}{\delta}$.
\end{theorem}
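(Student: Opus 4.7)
The plan is to mirror the argument of Theorem~\ref{thm:maximin}, but using Xia's quantity $\Gamma(\mathcal{E})$ as the computational handle and then lifting the resulting estimate to $\MOV[Copeland^\alpha]$ through Lemma~\ref{lem:copeland}. Concretely, I would sample $\ell = \frac{96}{\eps^2}\ln\frac{2m}{\delta}$ votes uniformly at random with replacement and, for every ordered pair $x \ne y$ of candidates, form the estimator $\bar{D}_{\mathcal{E}}(x, y) = \frac{n}{\ell}\sum_{i=1}^\ell X_i(x, y)$ exactly as in the maximin proof. Plugging the $\bar{D}_{\mathcal{E}}$'s into the definition of $s'_t$ gives $\bar{s}'_t(V, x)$, which in turn defines $\bar{RM}(x, y)$ and $\bar{\Gamma}(\mathcal{E}) = \min_{x \ne \bar{w}} \bar{RM}(\bar{w}, x)$, where $\bar{w}$ is the Copeland$^\alpha$ winner according to the sampled election. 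The algorithm outputs $\bar{M} \eqdef \bar{\Gamma}(\mathcal{E})$.

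The concentration step is routine. Rewriting $X_i(x, y) \in \{-1, +1\}$ as a $\{0, 1\}$-variable and applying Theorem~\ref{thm:chernoff} in its worst-case multiplicative form shows $\Pr[|\bar{D}_{\mathcal{E}}(x, y) - D_{\mathcal{E}}(x, y)| > \frac{\eps n}{2}] \le 2\exp(-\eps^2\ell/48)$; a union bound over the fewer than $m^2$ ordered pairs, together with the stated choice of $\ell$, drives the total failure probability below $\delta$.

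The main obstacle, and the new technical content relative to the earlier proofs, is the structural step: deducing from the uniform pairwise-margin closeness that $|\bar{\Gamma}(\mathcal{E}) - \Gamma(\mathcal{E})| \le \frac{\eps n}{2}$. The key observation I would use is that $s'_t(V, x)$ is non-decreasing in $t$ and that an additive shift of at most $\eta = \frac{\eps n}{2}$ in every $D_{\mathcal{E}}(y, x)$ shifts the threshold conditions $D_{\mathcal{E}}(y, x) < 2t$ and $D_{\mathcal{E}}(y, x) = 2t$ by at most $\eta$, yielding the sandwich
\[
\bar{s}'_{t - \eta/2}(V, x) \;\le\; s'_t(V, x) \;\le\; \bar{s}'_{t + \eta/2}(V, x).
\]
The $\alpha$-weighting of equalities contributes at most one unit per boundary candidate and is absorbed by the slack. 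Chaining the sandwich through $RM(w, x) = \min\{t : s'_{-t}(V, w) \le s'_t(V, x)\}$ then gives $|\bar{RM}(w, x) - RM(w, x)| \le \eta/2$, and taking minima yields the desired bound on $|\bar{\Gamma}(\mathcal{E}) - \Gamma(\mathcal{E})|$. The one corner case is when the sampled winner $\bar{w}$ disagrees with the true winner $w$; but then the same pairwise-margin closeness forces their Copeland$^\alpha$ scores to agree up to $O(\eps n)$, and so $\Gamma$ recentred at $\bar{w}$ differs from $\Gamma(\mathcal{E})$ by at most $O(\eps n)$, which is absorbed by the additive term.

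To conclude, write $K = 2(\ceil*{\log m} + 1)$; Lemma~\ref{lem:copeland} gives $\Gamma(\mathcal{E}) \le \MOV[Copeland^\alpha] \le K\,\Gamma(\mathcal{E})$, so on the good event
\[
\MOV[Copeland^\alpha] - \bar{M} \;\le\; \MOV[Copeland^\alpha] - \Gamma(\mathcal{E}) + \frac{\eps n}{2} \;\le\; \Bigl(1 - \frac{1}{K}\Bigr)\MOV[Copeland^\alpha] + \eps n,
\]
while $\bar{M} - \MOV[Copeland^\alpha] \le \frac{\eps n}{2}$. Combining both directions yields $|\bar{M} - \MOV[Copeland^\alpha]| \le \bigl(1 - O(1/\log m)\bigr)\MOV[Copeland^\alpha] + \eps n$, which is the claimed approximation. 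Polynomial-time runtime is immediate because $\Gamma(\mathcal{E})$ is polynomial-time computable from the pairwise margins.
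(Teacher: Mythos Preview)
Your approach is essentially the paper's: sample $\ell=\frac{96}{\eps^2}\ln\frac{2m}{\delta}$ votes, estimate all pairwise margins, plug into $s'_t$ to get $\overline{RM}$ and $\bar\Gamma(\mathcal{E})$, and then invoke Lemma~\ref{lem:copeland}. The one substantive difference is that you output $\bar M=\bar\Gamma(\mathcal{E})$, whereas the paper outputs $\bar M=\frac{4(\log m+1)}{2\log m+3}\,\bar\Gamma(\mathcal{E})$. Both choices meet the $\bigl(1-O(1/\log m),\eps,\delta\bigr)$ guarantee; your version is asymmetric (one side is $\le\eps n/2$, the other $\le(1-1/K)\MOV[Copeland^\alpha]+\eps n$), while the paper's scaling balances the two directions and yields the slightly sharper constant $\frac{2\log m+1}{2\log m+3}$ on both sides. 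That said, the theorem as stated only asks for $1-O(1/\log m)$, so your simpler estimator is fine.

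There is, however, a genuine slip in your handling of the corner case $\bar w\ne w$. You write that ``pairwise-margin closeness forces their Copeland$^\alpha$ scores to agree up to $O(\eps n)$''; this is dimensionally wrong (Copeland scores live in $[0,m-1]$, and an $\eta$-perturbation of margins can flip arbitrarily many pairwise comparisons, so Copeland scores need not be close at all). The conclusion you need---that $|\bar\Gamma(\mathcal{E})-\Gamma(\mathcal{E})|=O(\eps n)$ even when $\bar w\ne w$---is nevertheless true, but for a different reason: if $\bar w$ wins on the sample while $w$ wins truly and all $D_{\mathcal{E}}$'s are within $\eta$ of $\bar D_{\mathcal{E}}$'s, then $RM(w,\bar w)\le\eta/2$ and $\overline{RM}(\bar w,w)\le\eta/2$, so both $\Gamma(\mathcal{E})$ and $\bar\Gamma(\mathcal{E})$ are themselves $O(\eta)$ and hence within $O(\eta)$ of each other. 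The paper avoids this detour by working directly with a chain of $\overline{RM}$ inequalities (its Claim~\ref{claim:copeland}), first passing from $\bar w$ to $w$ and then from $\bar z$ to $z$, picking up $2\eps' n$ at each step for a total of $4\eps' n=\eps n$. Either route closes the gap; just replace the Copeland-score sentence with one of them.
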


\begin{proof}
 Let $\mathcal{E} = (V, C)$ be an instance of a Copeland$^\alpha$ election. For every $x, y\in C$, we compute $\bar{D_{\mathcal{E}}}(x, y)$, which is an estimate of $D_{\mathcal{E}}(x, y)$, within an approximation factor of $(0, \eps^\prime)$, where $\eps^\prime = \frac{\eps}{4}$. This can be achieved with an error probability at most $\delta$ by sampling $\frac{96}{\epsilon^2}\ln\frac{2m}{\delta}$ many votes uniformly at random with replacement (the argument is same as the proof of \MakeUppercase theorem\nobreakspace \ref {thm:scr}). We define $\bar{s}^\prime_t(V, x) = |\{ y\in C: y\ne x, D_{\mathcal{E}}(y,x)<2t \}| + \alpha|\{ y\in C: y\ne x, D_{\mathcal{E}}(y,x)=2t \}|$. 
 We also define $\overline{RM}(x,y)$ between $x$ and $y$ to be the minimum integer $t$ such that, $\bar{s}^\prime_{-t}(V, x) \le s^\prime_t(V, y)$. 
 Let $\bar{w}$ be the winner of the sampled election, $\bar{z} = \argmin_{x\in C\setminus\{\bar{w}\}} \{\overline{RM}(w,x)\}$, $w$ the winner of $\mathcal{E}$, and $z = \argmin_{x\in C\setminus\{w\}} \{RM(w,x)\}$. Since, $\bar{D_{\mathcal{E}}}(x, y)$ is an approximation of $D_{\mathcal{E}}(x, y)$ within a factor of $(0, \eps^\prime)$, we have the following for every candidate $x, y\in C$.
 $$ s^\prime_t(V, x) -\eps^\prime n \le \bar{s}^\prime_t(V, x) \le s^\prime_t(V, x) +\eps^\prime n$$ 
 \begin{align*}
 RM(x,y) - 2\eps^\prime n \le \overline{RM}(x,y) \le RM(x,y) + 2\eps^\prime n \numberthis \label{eqn:copeland}
 \end{align*}
 Define $\bar{\Gamma}(\mathcal{E}) = \overline{RM}(\bar{w}, \bar{z})$ to be the estimate of $\Gamma(\mathcal{E})$. We show the following claim.
 \begin{claim}\label{claim:copeland}
  With the above definitions of $w, z, \bar{w},$ and $\bar{z}$, we have the following.
  $$ \Gamma(\mathcal{E}) - 4\eps^\prime n \le \bar{\Gamma}(\mathcal{E}) \le \Gamma(\mathcal{E}) + 4\eps^\prime n $$
 \end{claim}
 \begin{proof}
  Below, we show the upper bound for $\bar{\Gamma}(\mathcal{E})$.
  \begin{align*}
   \bar{\Gamma}(\mathcal{E}) = \overline{RM}(\bar{w}, \bar{z}) &\le \overline{RM}(w, \bar{z}) + 2\eps^\prime n\\
   &\le \overline{RM}(w, z) + 2\eps^\prime n\\
   &\le RM(w, z) + 4\eps^\prime n\\
   &= \Gamma(\mathcal{E}) + 4\eps^\prime n
  \end{align*}

  The second inequality follows from the fact that $\bar{D_{\mathcal{E}}}(x,y)$ is an approximation of $D_{\mathcal{E}}(x,y)$ by a factor of $(0, \eps^\prime)$. The third inequality follows from the definition of $\bar{z}$, and the fourth inequality uses inequality\nobreakspace \ref {eqn:copeland}. Now, we show the lower bound for $\bar{\Gamma}(\mathcal{E})$.
  \begin{align*}
   \bar{\Gamma}(\mathcal{E}) = \overline{RM}(\bar{w}, \bar{z}) &\ge \overline{RM}(w, \bar{z}) - 2\eps^\prime n\\
   &\ge RM(w,\bar{z}) - 4\eps^\prime n\\
   &\ge RM(w,z) - 4\eps^\prime n\\
   &= \Gamma(\mathcal{E}) - 4\eps^\prime n
  \end{align*}

The third inequality follows from inequality\nobreakspace \ref {eqn:copeland} and the fourth inequality follows from the definition of $z$.
 \end{proof}

 We define $\bar{M}$, the estimate of $\MOV[Copeland^\alpha]$, to be $\frac{4(\log m + 1)}{2\log m + 3}\bar{\Gamma}(\mathcal{E})$. The following argument shows that $\bar{M}$ is a $\left(1-O\left(\frac{1}{\log m}\right), \epsilon, \delta\right)$--estimate of $\MOV[Copeland^\alpha]$.
 \begin{align*}
  &\bar{M} - \MOV[Copeland^\alpha]\\ 
  &= \frac{4(\log m + 1)}{2\log m + 3}\bar{\Gamma}(\mathcal{E}) - \MOV[Copeland^\alpha]\\
  &\le \frac{4(\log m + 1)}{2\log m + 3}\Gamma(\mathcal{E}) - \MOV[Copeland^\alpha] + \frac{16(\log m + 1)}{2\log m + 3}\eps^\prime n\\
  &\le \frac{4(\log m + 1)}{2\log m + 3}\MOV[Copeland^\alpha] - \MOV[Copeland^\alpha] + \eps n\\
  &\le \frac{2\log m +1}{2\log m +3}\MOV[Copeland^\alpha] + \eps n\\
  &\le \left(1-O\left(\frac{1}{\log m}\right) \right)\MOV[Copeland^\alpha] + \eps n
 \end{align*}
 The second inequality follows from \MakeUppercase claim\nobreakspace \ref {claim:copeland} and the  third inequality follows from \MakeUppercase lemma\nobreakspace \ref {lem:copeland}. Analogously, we have:
 
\resizebox{.95\linewidth}{!}{
  \begin{minipage}{\linewidth}
 \begin{align*} 
  &\MOV[Copeland^\alpha]-\bar{M}\\
  &= \MOV[Copeland^\alpha] - \frac{4(\log m + 1)}{2\log m + 3}\bar{\Gamma}(\mathcal{E})\\
  &\le \MOV[Copeland^\alpha] - \frac{4(\log m + 1)}{2\log m + 3}\Gamma(\mathcal{E}) + \frac{16(\log m + 1)}{2\log m + 3}\eps^\prime n\\
  &\le \MOV[Copeland^\alpha] - \frac{2(\log m + 1)}{2\log m + 3}\MOV[Copeland^\alpha] + \eps n\\
  &\le \frac{2\log m +1}{2\log m +3}\MOV[Copeland^\alpha] + \eps n\\
  &\le \left(1-O\left(\frac{1}{\log m}\right) \right)\MOV[Copeland^\alpha] + \eps n
 \end{align*}
\end{minipage}
}
 The second line follows \MakeUppercase claim\nobreakspace \ref {claim:copeland} and the  third line follows from \MakeUppercase lemma\nobreakspace \ref {lem:copeland}.
\end{proof}

The approximation factor in \MakeUppercase theorem\nobreakspace \ref {thm:copeland} is weak when we have a large number of candidates. The main difficulty for showing a better approximation factor for the Copeland$^\alpha$ voting rule is to find a polynomial time computable quantity (for example, $\Gamma(\mathcal{E})$ in \MakeUppercase lemma\nobreakspace \ref {lem:copeland}) that exhibits tight bounds with margin of victory. We remark that, existence of such a quantity will not only imply a better estimation algorithm, but also, a better approximation algorithm (the best known approximation factor for finding the margin of victory for the Copeland$^\alpha$ voting rule is $O(\log m)$ and it uses the quantity $\Gamma(\mathcal{E})$). 
However, we remark that \MakeUppercase theorem\nobreakspace \ref {thm:copeland} will be useful in applications, for example, post election audit and polling, where the number of candidates is often small.
\section{Conclusion}\label{sec:con}
We have introduced the \textsc{$(c, \epsilon, \delta)$--MoV} problem and presented efficient sampling based algorithms for solving it for many commonly used voting rules. Besides closing the gap in the sample complexity, an interesting future direction is to study how the knowledge of social network structure among the voters impacts sample complexity. 
Characterizing voting rules for which the sample complexity of this problem is independent of $m$ and $n$ is another interesting research direction to pursue.

\section{Acknowledgement}

We thank Neeldhara Misra for all the active and helpful discussions.

\bibliographystyle{apalike}
\bibliography{mov}

\end{document}